\documentclass[11pt]{article}

\usepackage{fullpage,times}

\usepackage{amsthm,amsfonts,amsmath,amssymb,epsfig,color,float,graphicx,verbatim}
\usepackage{algorithm,algorithmic} \usepackage{hyperref}

\newtheorem{theorem}{Theorem} \newtheorem{proposition}{Proposition}
\newtheorem{lemma}{Lemma} 
\newtheorem{definition}{Definition} \newtheorem{remark}{Remark}

\newcommand{\reals}{\mathbb{R}}

 \newcommand{\bx}{\mathbf{x}}
\newcommand{\bw}{\mathbf{w}} \newcommand{\bg}{\mathbf{g}}
 
 \newcommand{\bp}{\mathbf{p}}
\newcommand{\bv}{\mathbf{v}} \newcommand{\bz}{\mathbf{z}}
\newcommand{\bc}{\mathbf{c}} 
\newcommand{\bh}{\mathbf{h}} \newcommand{\by}{\mathbf{y}}
\newcommand{\bk}{\mathbf{k}} \newcommand{\bn}{\mathbf{n}}

\newcommand{\balpha}{\boldsymbol{\alpha}}

 \newcommand{\Ocal}{\mathcal{O}}

\newcommand{\norm}[1]{\|#1\|} \newcommand{\inner}[1]{\langle#1\rangle}

\newcommand{\ignore}[1]{}

\newcommand{\secref}[1]{Sec.~\ref{#1}}
\newcommand{\subsecref}[1]{Subsection~\ref{#1}}

\renewcommand{\eqref}[1]{Eq.~(\ref{#1})}
\newcommand{\lemref}[1]{Lemma~\ref{#1}}

\newcommand{\thmref}[1]{Thm.~\ref{#1}}

\newenvironment{Ouralgorithm}[1][\  ] %
{
\rm
\begin{tabbing}
.\=...\=...\=...\=...\=  \+ \kill
} %
{\end{tabbing}
}
\floatstyle{ruled}
\newfloat{Algorithm}{thp}{alg}
\newenvironment{Balgorithm} %
{
\begin{minipage}{1.0\linewidth}
\begin{Ouralgorithm} %
} { \end{Ouralgorithm} \end{minipage} }


\sloppy

\title{An Algorithm for Training Polynomial Networks}
\author{Roi Livni\\ The Hebrew University \\ roi.livni@mail.huji.ac.il
  \and Shai Shalev-Shwartz\\ The Hebrew University \\
  shais@cs.huji.ac.il \and
  Ohad Shamir\\
  Weizmann Institute of Science\\
  ohad.shamir@weizmann.ac.il } \date{}

\begin{document}

\maketitle

\begin{abstract}
We consider deep neural networks, in which the output of each node is a quadratic function of
its inputs. Similar to other deep architectures, these networks can
compactly represent any function on a finite training set. The main goal of
this paper is the derivation of an efficient layer-by-layer algorithm for
training such networks, which we denote as the \emph{Basis Learner}. The
algorithm is a universal learner in the sense that the training error is
guaranteed to decrease at every iteration, and can eventually reach zero
under mild conditions. We present practical implementations of this
algorithm, as well as preliminary experimental results. We also compare our
deep architecture to other shallow architectures for learning polynomials,
in particular kernel learning.
\end{abstract}

\section{Introduction}

One of the most significant recent developments in machine learning has been
the resurgence of ``deep learning'', usually in the form of artificial neural
networks. These systems are based on a multi-layered architecture, where the
input goes through several transformations, with higher-level concepts
derived from lower-level ones. Thus, these systems are considered to be
particularly suitable for hard AI tasks, such as computer vision and language
processing.

The history of such multi-layered systems is long and uneven. They have been
extensively studied in the 80's and early 90's, but with mixed success, and
were eventually displaced to a large extent by shallow architectures such as
the Support Vector Machine (SVM) and boosting algorithms. These shallow
architectures not only worked well in practice, but also came with provably
correct and computationally efficient training algorithms, requiring tuning
of only a small number of parameters - thus allowing them to be incorporated
into standard software packages.

However, in recent years, a combination of algorithmic advancements, as well
as increasing computational power and data size, has led to a breakthrough in
the effectiveness of neural networks, and deep learning systems have shown
very impressive practical performance on a variety of domains (a few examples
include
\cite{lecun1995convolutional,HiOsTe06,ranzato2007unsupervised,bengio2007scaling,ColWes08,lee2009convolutional,LRMDCCDN12}
as well as \cite{Beng09} and references therein). This has led to a
resurgence of interest in such learning systems.

Nevertheless, a major caveat of deep learning is - and always has been - its
strong reliance on heuristic methods. Despite decades of research, there is
no clear-cut guidance on how one should choose the architecture and size of
the network, or the type of computations it performs. Even when these are
chosen, training these networks involves non-convex optimization problems,
which are often quite difficult. No worst-case guarantees are possible, and
pulling it off successfully is still much of a black art, requiring
specialized expertise and much manual work.

In this note, we propose an efficient algorithm to build and train a deep
network for supervised learning, with some formal guarantees. The algorithm
has the following properties:
\begin{itemize}
\item It constructs a deep architecture, one which relies on its
    multi-layered structure in order to compactly represent complex
    predictors.
\item It provably runs in polynomial time, and is amenable to theoretical
    analysis and study. Moreover, the algorithm does not rely on
    complicated heuristics, and is easy to implement.
\item The algorithm is a universal learner, in the sense that the training
    error is guaranteed to decrease as the network increases in size,
    ultimately reaching zero under mild conditions. 
    \item In its basic idealized form, the algorithm is parameter-free. The
        network is grown incrementally, where each added layer decreases
        the bias while increasing the variance. The process can be stopped
        once satisfactory performance is obtained. The architectural
        details of the network are automatically determined by theory. We
        describe a more efficient variant of the algorithm, which requires
        specifying the maximal width of the network in advance. Optionally,
        one can do additional fine-tuning (as we describe later on), but
        our experimental results indicate that even this rough tuning is
        already sufficient to get promising results.
\end{itemize}

The algorithm we present trains a particular type of deep learning system,
where each computational node computes a linear or quadratic function of its
inputs. Thus, the predictors we learn are polynomial functions over the input
space (which we take here to be $\reals^d$). The networks we learn are also
related to \emph{sum-product networks}, which have been introduced in the
context of efficient representations of partition functions
\cite{PoDo11,DeBe11}.

The derivation of our algorithm is inspired by ideas from \cite{VCA}, used
there for a different purpose. At its core, our method attempts to build a
network which provides a good approximate basis for the values attained by
all polynomials of bounded degree over the training instances.
Similar to a well-known principle in modern deep learning, the layers of our
network are built one-by-one, creating higher-and-higher level
representations of the data. Once such a representation is built, a final
output layer is constructed by solving a simple convex optimization problem.

The rest of the paper is structured as follows. In
\secref{sec:preliminaries}, we introduce notation. The heart of our paper is
\secref{sec:algorithmanalysis}, where we present our algorithm and analyze
its properties. In \secref{sec:samplecomplexity}, we discuss sample
complexity (generalization) issues. In \secref{sec:kernels} we compare our
deep architecture for learning polynomials to the shallow architecture
obtained by kernel learning. In \secref{sec:experiments}, we present
preliminary experimental results.


\section{Preliminaries}\label{sec:preliminaries}

We use bold-face letters to denote vectors. In particular, $\mathbf{1}$
denotes the all-ones vector. For any two vectors $\bg=(g_1,\ldots,g_d)$,
$\bh=(h_1,\ldots,h_d)$, we let $\bg\circ\bh$ denote their Hadamard product,
namely the vector $(g_1h_1,\ldots,g_dh_d)$. $\norm{\cdot}$ refers to the
Euclidean norm. $\text{Ind}(\cdot)$ refers to the indicator function.

For two matrices $F,G$ with the same number of rows, we let $[F~~G]$ denote
the new matrix formed by concatenating the columns of $F,G$. For a matrix
$F$, $F_{i,j}$ refers to the entry in row $i$ and column $j$; $F_j$ refers to
its $j$-th column; and $|F|$ refers to the number of columns.

We assume we are given a labeled training data
$\{(\bx_1,y_1),\ldots,(\bx_m,y_m)\}$, where each $\bx_i$ is in $\reals^d$,
and $y_i$ is a scalar label/target value. We let $X$ denote the matrix such
that $X_{i,j}=x_{i,j}$, and $\by$ is the vector $(y_1,\ldots,y_m)$. For
simplicity of presentation, we will assume that $m>d$, but note that for most
results this can be easily relaxed.

Given a vector of predicted values $\bv$ on the training set (or a matrix $V$
in a multi-class prediction setting), we use $\ell(\bv,\by)$ to denote the
training error, which is assumed to be a convex function of $\bv$. Some
examples include: \begin{itemize} \item
  Squared loss:
  $\ell(\bv,\by)=\frac{1}{m}\norm{\bv-\by}^2$ \item Hinge loss:
  $\ell(\bv,\by)=\frac{1}{m}\sum_{i=1}^{m}\max\{0,1-y_i v_i\}$ \item
  Logistic loss:
  $\ell(\bv,\by)=\frac{1}{m}\sum_{i=1}^{m}\log(1+\exp(-y_i v_i))$
    \item Multiclass hinge loss: $\ell(V,\by)=\frac{1}{m}\sum_{i=1}^{m}\max\{0,1+\max_{j\neq y_i}V_{i,j}-V_{i,y_i}\}$ (here, $V_{i,j}$ is the confidence score for instance $i$ being in class $j$)
\end{itemize}
Moreover, in the context of linear predictors, we can consider regularized
loss functions, where we augment the loss by a regularization term such as
$\frac{\lambda}{2} \norm{\bw}^2$ (where $\bw$ is the linear predictor) for
some parameter $\lambda>0$.

Multivariate polynomials are functions over $\reals^d$, of the form
\begin{equation}\label{eq:polydef}
\bp(\bx) = \sum_{i=0}^{\Delta}\sum_{\balpha^{(i)}} w_{\balpha^{(i)}}\prod_{l=1}^{d}x_l^{\alpha^{(i)}_l},
\end{equation}
where $\balpha^{(i)}$ ranges over all $d$-dimensional vectors of positive
integers, such that $\sum_{l=1}^{d}\alpha^{(i)}_l = i$, and $\Delta$ is the
degree of the polynomial. Each term $\prod_{l=1}^{d}x_l^{\alpha^{(i)}_l}$ is
a monomial of degree $i$.

To represent our network, we let $n^i_j(\cdot)$ refer to the $j$-th node in
the $i$-th layer, as a function of its inputs. In our algorithm, the function
each node computes is always either a linear function, or a weighted product
of two inputs:
\[
(z_1,z_2) ~\mapsto~ w z_1 z_2 ~,
\]
where $w \in \reals$. The depth of the network corresponds to the number of
layers, and the width corresponds to the largest number of nodes in any
single layer.

\section{The Basis Learner: Algorithm and
  Analysis}\label{sec:algorithmanalysis}

We now turn to develop our Basis Learner algorithm, as well as the
accompanying analysis. We do this in three stages: First, we derive a generic
and idealized version of our algorithm, which runs in polynomial time but is
not very practical; Second, we analyze its properties in terms of time
complexity, training error, etc.; Third, we discuss and analyze a more
realistic variant of our algorithm, which also enjoys some theoretical
guarantees, generalizes better, and is more flexible in practice.

\subsection{Generic Algorithm}

Recall that our goal is to learn polynomial predictors, using a deep
architecture, based on a training set with instances $\bx_1,\ldots,\bx_m$.
However, let us ignore for now the learning aspect and focus on a
\emph{representation} problem: how can we build a network capable of
representing the values of \emph{any} polynomial over the instances?

At first glance, this may seem like a tall order, since the space of all
polynomials is not specified by any bounded number of parameters. However,
our first crucial observation is that we care (for now) only about the values
on the $m$ training instances. We can represent these values as
$m$-dimensional vectors in $\reals^m$. Moreover, we can identify each
polynomial $\bp$ with its values on the training instances, via the linear
projection
\[
\bp \mapsto (\bp(\bx_1),\ldots,\bp(\bx_m)).
\]
Since the space of all polynomials can attain any set of values on a finite
set of distinct points \cite{GasSa00}, we get that polynomials span
$\reals^m$ via this linear projection. By a standard result from linear
algebra, this immediately implies that there are $m$ polynomials
$\bp_1,\ldots,\bp_m$, such that
$\{(\bp_i(\bx_1),\ldots,\bp_i(\bx_m))\}_{i=1}^{m}$ form a basis of $\reals^m$
- we can write any set of values $(y_1,\ldots,y_m)$ as a linear combination
of these. Formally, we get the following:
\begin{lemma}
Suppose $\bx_1,\ldots,\bx_m$ are distinct. Then there exist $m$ polynomials
$\bp_1,\ldots,\bp_m$, such that:
$\{(\bp_i(\bx_1),\ldots,\bp_i(\bx_m))\}_{i=1}^{m}$ form a basis of
$\reals^m$.

Hence, for any set of values $(y_1,\ldots,y_m)$, there is a coefficient
vector $(w_1,\ldots,w_m)$, so that $\sum_{i=1}^{m}w_i \bp_i(\bx_j)=y_j$ for
all $j=1,\ldots,m$.

\end{lemma}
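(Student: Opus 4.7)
The plan is to deduce the lemma from elementary linear algebra, once we know that the ``evaluation'' map from polynomials to $\reals^m$ is surjective. Concretely, I would consider the linear map $T : \Pcal \to \reals^m$ defined by $T(\bp) = (\bp(\bx_1),\ldots,\bp(\bx_m))$, where $\Pcal$ denotes the space of multivariate polynomials on $\reals^d$. The cited result of \cite{GasSa00}, combined with the distinctness of the $\bx_i$, says precisely that $T$ is surjective. Since $\reals^m$ is $m$-dimensional, I can pick any basis $\be_1,\ldots,\be_m$ of $\reals^m$ (for instance the standard basis), choose preimages $\bp_i \in T^{-1}(\be_i)$, and obtain $m$ polynomials whose evaluation vectors form a basis.

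For the second part of the statement, I would simply invoke the definition of a basis: since $\{T(\bp_i)\}_{i=1}^m$ spans $\reals^m$, the target vector $\by=(y_1,\ldots,y_m)$ can be written as $\by = \sum_{i=1}^{m} w_i \, T(\bp_i)$ for some scalars $w_i$. Reading this coordinate by coordinate gives $\sum_i w_i \bp_i(\bx_j) = y_j$ for each $j$, which is the claim.

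If one prefers a self-contained argument that avoids citing \cite{GasSa00}, I would instead exhibit explicit polynomials of the desired form. A convenient choice is a multivariate analogue of Lagrange interpolation: for each $i$, define
\[
\bp_i(\bx) \;=\; \prod_{j \neq i} \frac{\norm{\bx - \bx_j}^2}{\norm{\bx_i - \bx_j}^2} \,.
\]
This is a genuine polynomial in $\bx$ because $\norm{\bx - \bx_j}^2 = \sum_{l=1}^d (x_l - x_{j,l})^2$ is polynomial, and the denominators are nonzero by distinctness of the $\bx_i$. A direct check gives $\bp_i(\bx_j) = \ind[i=j]$, so $\{T(\bp_i)\}_{i=1}^m$ is literally the standard basis of $\reals^m$.

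I do not anticipate a real obstacle here; the lemma is essentially a restatement of surjectivity of the evaluation map together with the observation that any surjection onto a finite-dimensional space admits a section whose image is a basis. The only point to watch is that the space $\Pcal$ of polynomials is infinite-dimensional, so one should be careful to state ``there exist $m$ polynomials whose images form a basis'' rather than ``the $\bp_i$ are determined by dimension counting.'' Either the abstract route or the explicit Lagrange-style construction handles this cleanly.
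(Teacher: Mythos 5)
Your primary argument mirrors the paper exactly: the lemma is justified by observing (via the cited interpolation fact in \cite{GasSa00}) that the evaluation map $\bp \mapsto (\bp(\bx_1),\ldots,\bp(\bx_m))$ is surjective onto $\reals^m$, and then choosing preimages of any basis; the ``hence'' clause is just the definition of spanning. Your self-contained alternative via the multivariate Lagrange-style polynomials $\bp_i(\bx)=\prod_{j\neq i}\norm{\bx-\bx_j}^2/\norm{\bx_i-\bx_j}^2$ is also correct and is a genuinely different route that the paper does not spell out: each factor is a degree-$2$ polynomial, the denominators are nonzero by distinctness, and $\bp_i(\bx_k)=\ind(i=k)$, so the evaluation vectors are literally the standard basis. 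The advantage is that it removes any reliance on the external reference and makes surjectivity concrete; the only cost is that the resulting polynomials have degree $2(m-1)$, which is irrelevant to the lemma but far from the minimal degree the paper's network construction later aims for. Both routes are sound.
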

This lemma implies that if we build a network, which computes such $m$
polynomials $\bp_1,\ldots,\bp_m$, then we can train a simple linear
classifier on top of these outputs, which can attain any target values over
the training data.

While it is nice to be able to express any target values $(y_1,\ldots,y_m)$
as a function of the input instances $(\bx_1,\ldots,\bx_m)$, such an
expressive machine will likely lead to overfitting. Our generic algorithm
builds a deep network such that the nodes of the first $r$ layers form a
basis of all values attained by degree-$r$ polynomials. Therefore, we start
with a simple network, which might have a large bias but will tend not to
overfit (i.e. low variance), and as we make the network deeper and deeper we
gradually decrease the bias while increasing the variance. Thus, in
principle, this algorithm can be used to train the natural curve of solutions
that can be used to control the bias-variance tradeoff.

It remains to describe how we build such a network. First, we show how to
construct a basis which spans all values attained by degree-1 polyonomials
(i.e. linear functions). We then show how to enlarge this to a basis of all
values attained by degree-2 polynomials, and so on. Each such enlargement of
the degree corresponds to another layer in our network. Later, we will prove
that each step can be calculated in polynomial time and the whole process
terminates after a polynomial number of iterations.


\subsubsection{Constructing the First Layer}

The set of values attained by degree-1 polynomials (linear) functions over
the data is
\begin{equation}\label{eq:beginset}
\left\{\left(\inner{\bw,[1~ \bx_1]},\ldots,\inner{\bw,[1~\bx_m]}\right):\bw\in \reals^{d+1}\right\},
\end{equation}
which is a $d+1$-dimensional linear subspace of $\reals^m$. Thus, to
construct a basis for it, we only need to find $d+1$ vectors
$\bw_1,\ldots,\bw_{d+1}$, so that the set of vectors
$\left\{\left(\inner{\bw_j,[1~\bx_1]},\ldots,\inner{\bw_j,[1~\bx_m]}\right)\right\}_{j=1}^{d+1}$
are linearly independent. This can be done in many ways. For example, one can
construct an orthogonal basis to \eqref{eq:beginset}, using Gram-Schmidt or
SVD (equivalently, finding a $(d+1)\times (d+1)$ matrix $W$, so that $[1~X]W$
has orthogonal columns)\footnote{This is
  essentially the same as the first step of the VCA algorithm in
  \cite{VCA}. Moreover, it is very similar to performing Principal
  Component Analysis (PCA) on the data, which is a often a standard
  first step in learning. It differs from PCA in that the SVD is done
  on the augmented matrix $[1~X]$, rather than on a centered version
  of $X$. This is significant here, since the columns of a centered
  data matrix $X$ cannot express the $\mathbf{1}$
  vector, hence we cannot express the constant $1$ polynomial on the data.\label{footnote1}}. At this stage, our focus is to present our
approach in full generality, so we avoid fixing a specific basis-construction
method.

Whatever basis-construction method we use, we end up with some linear
transformation (specified by a matrix $W$), which maps $[1~X]$ into the
constructed basis. The columns of $W$ specify the $d+1$ linear functions
forming the first layer of our network: For all $j=1,\ldots,d+1$, the $j$'th
node of the first layer is the function
\[
n^1_j(\bx) = \inner{W_j,[1~X]},
\]
and we have the property that
$\{(n^1_j(\bx_1),\ldots,n^1_j(\bx_m))\}_{j=1}^{d+1}$ is a basis for all
values attained by degree-1 polynomials over the training data. We let $F^1$
denote the $m\times (d+1)$ matrix\footnote{If the data lies in a subspace of
$\reals^d$ then the number of columns of $F^1$ will be the dimension of this
subspace plus $1$.} whose columns are the vectors of this set, namely,
$F^1_{i,j} = n^1_j(\bx_i)$.

\subsubsection{Constructing The Second Layer}

So far, we have a one-layer network whose outputs span all values attained by
linear functions on the training instances. In principle, we can use the same
trick to find a basis for degree-$2,3,\ldots$ polynomials: For any degree
$\Delta$ polynomial, consider the space of all values attained by such
polynomials over the training data, and find a spanning basis. However, we
quickly run into a computational problem, since the space of all degree
$\Delta$ polynomials in $\reals^d$ ($d>1$) increases exponentially in
$\Delta$, requiring us to consider exponentially many vectors. Instead, we
utilize our deep architecture to find a \emph{compact} representation of the
required basis, using the following simple but important observation:
\begin{lemma}\label{lem:decompose} Any degree $t$ polynomial can be
  written as \[ \sum_{i}\bg_i(\bx) \bh_i(\bx)+\bk(\bx), \] where
  $\bg_i(\bx)$ are degree-1 polynomials, $\bh_i(\bx)$ are
  degree-$(t-1)$ polynomials, and $\bk(\bx)$ is a polynomial of degree
  at most $t-1$. \end{lemma}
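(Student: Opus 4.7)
The plan is to prove the lemma by a direct monomial-by-monomial decomposition, using the definition of a degree-$t$ polynomial given in \eqref{eq:polydef}. Writing the polynomial as a sum of monomials, I will split it into two parts: the terms of degree exactly $t$, and everything of degree at most $t-1$. The latter group will serve as the polynomial $\bk(\bx)$ directly.

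For the degree-$t$ part, I first handle the case $t=0$ (the whole polynomial is a constant, so it collapses into $\bk$ and the sum is empty; trivially nothing to prove). Assuming $t\ge 1$, I would take an arbitrary degree-$t$ monomial $\prod_{l=1}^d x_l^{\alpha_l}$ with $\sum_l \alpha_l = t \ge 1$. Since the exponents sum to at least $1$, I can pick some index $l^\star$ with $\alpha_{l^\star}\ge 1$ and factor the monomial as $x_{l^\star}\cdot \prod_{l}x_l^{\alpha'_l}$, where $\alpha'_{l^\star}=\alpha_{l^\star}-1$ and $\alpha'_l=\alpha_l$ for $l\ne l^\star$. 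The left factor $x_{l^\star}$ is a degree-1 polynomial, and the right factor is a monomial of degree exactly $t-1$. Summing over all degree-$t$ monomials in the original polynomial (with their original coefficients absorbed into the $\bh_i$), this yields the desired representation $\sum_i \bg_i(\bx)\bh_i(\bx)$.

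Combining the two pieces gives the stated decomposition. The main step to get right is just bookkeeping: making sure that for every degree-$t$ monomial a single variable can be peeled off (which is guaranteed precisely because the exponents sum to $t\ge 1$), and that coefficients are carried along correctly so that the $\bh_i$ remain polynomials of degree exactly $t-1$. There is no real obstacle; the lemma is essentially a structural observation that $\Delta$-degree monomials always factor as (degree $1$)$\times$(degree $\Delta-1$), which is exactly the recursive structure exploited by the layer-by-layer basis construction in the algorithm.
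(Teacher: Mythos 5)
Your proposal is correct and follows essentially the same route as the paper's proof: split off the sub-degree-$(t-1)$ part as $\bk$, and factor each degree-$t$ monomial as a single variable (degree-$1$) times a degree-$(t-1)$ monomial. You simply spell out the ``peel off one variable'' step in more explicit bookkeeping detail than the paper does.
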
 \begin{proof} Any polynomial of degree
  $t$ can be written as a weighted sum of monomials of degree $t$,
  plus a polynomial of degree $\leq t-1$. Moreover, any monomial of
  degree $t$ can be written as a product of a monomial of degree
  $t-1$ and a monomial of degree $1$. Since $(t-1)$-degree monomials
  are in particular $(t-1)$-degree polynomials, the result follows.
\end{proof}

The lemma implies that any degree-2 polynomial can be written as the sum of
products of degree-1 polynomials, plus a degree-1 polynomial. Since the nodes
at the first layer of our network span all degree-1 polynomials, they in
particular span the polynomials $\bg_i,\bh_i,\bk$, so it follows that any
degree-2 polynomial can be written as
\begin{align*}
& \sum_{i} \left(\sum_j \alpha^{(\bg_i)}_j n^1_j(\bx)\right) \left(\sum_r \alpha^{(\bh_i)}_r n^1_r(\bx)\right) + \left(\sum_j \alpha^{(\bk)}_j n^1_j(\bx)\right) \\
&= \sum_{j,r} n^1_j(\bx) n^1_r(\bx) \left( \sum_i \alpha^{(\bg_i)}_j \alpha^{(\bh_i)}_r\right) + \sum_j n^1_j(\bx)  \left(\alpha^{(\bk)}_j \right)~,
\end{align*}
where all the $\alpha$'s are scalars. In other words, the vector of values
attainable by any degree-2 polynomial is in the span of the vector of values
attained by nodes in the first layer, and products of the outputs of every
two nodes in the first layer.

Let us now switch back to an algebraic representation. Recall that in
constructing the first layer, we formed a matrix $F^1$, whose columns span
all values attainable by degree-1 polynomials. Then the above implies that
the matrix $[F~\tilde{F}^2]$, where
\[
\tilde{F}^2 ~=~ \left[(F^1_1\circ F^1_1)~\cdots~(F^1_{1}\circ F^1_{|F_1|})~\cdots~(F^1_{|F^1|}\circ F^1_{1})~\cdots~(F^1_{|F^1|}\circ F^1_{|F_1|})\right],
\]
spans all possible values attainable by degree-2 polynomials. Thus, to get a
basis for the values attained by degree-2 polynomials, it is enough to find
some column subset $F^2$ of $\tilde{F}^2$, so that $[F~F^2]$'s columns are a
linearly independent basis for $[F~\tilde{F}^2]$'s columns. Again, this basis
construction can be done in several ways, using standard linear algebra (such
as a Gram-Schmidt procedure or more stable alternative methods). The columns
of $F^2$ (which are a subset of the columns of $\tilde{F}^2$) specify the 2nd
layer of our network: each such column, which corresponds to (say) $F^1_i
\circ F^1_j$, corresponds in turn to a node in the 2nd layer, which computes
the product of nodes $n^1_i(\cdot)$ and $n^1_j(\cdot)$ in the first layer.
We now redefine $F$ to be the augmented matrix $[F~F^2]$.

\subsubsection{Constructing Layer 3,4,\ldots}

It is only left to repeat this process. At each iteration $t$, we maintain a
matrix $F$, whose columns form a basis for the values attained by all
polynomials of degree $\leq t-1$. We then consider the new matrix
\[
\tilde{F}^t~=~ \left[(F^{t-1}_1\circ F^1_1)~\cdots~(F^{t-1}_{1}\circ F^1_{|F_1|})~\cdots~(F^{t-1}_{|F^{t-1}|}\circ F^1_{1})~\cdots~(F^{t-1}_{|F^{t-1}|}\circ F^1_{|F_1|})\right],
\]
and find a column subset $F^t$ so that the columns of $[F~F^t]$ form a basis
for the columns of $[F~\tilde{F}^t]$. We then redefine $F:=[F~F^t]$, and are
assured that the columns of $F$ span the values of all polynomials of degree
$\leq t$ over the data. By adding this newly constructed layer, we get a
network whose outputs form a basis for the values attained by all polynomials
of degree $\leq t$ over the training instances.

To maintain numerical stability, it may be desirable to multiply each column
of $F^t$ by a normalization factor, e.g. by scaling each column $F^t_i$ so
that the second moment $\frac{1}{m}\norm{F^t_i}^2$ across the column is $1$
(otherwise, the iterated products may make the values in the matrix very
large or small). Overall, we can specify the transformation from
$\tilde{F}_t$ to $F^t$ via a matrix $W$ of size $|F^{t-1}|\times |F^1|$, so
that for any $r=1,\ldots,|F^t|$, \[ F^t_r := W_{i(r),j(r)}F^{t-1}_{i(r)}
\circ F^{1}_{j(j)}. \]

As we will prove later on, if at any stage the subspace spanned by $[F~ F^t]$
is the same as the subspace spanned by $[F~ \tilde{F}^t]$, then our network
can span the values of all polynomials of any degree over the training data,
and we can stop the process.

The process (up to the creation of the output layer) is described in Figure
\ref{alg:main}, and the resulting network architecture is shown in Figure
\ref{fig:architecture}. We note that the resulting network has a feedforward
architecture. The connections, though, are not only between adjacent layers,
unlike many common deep learning architectures. Moreover, although we refrain
from fixing the basis creation methods at this stage, we provide one possible
implementation in Figure \ref{fig:createbasisexample}. We emphasize, though,
that other variants are possible, and the basis construction method can be
different at different layers.

\begin{figure}
\begin{center}
\fbox{
\begin{Balgorithm}
  Initialize $F$ as an empty matrix, and $\tilde{F}^1 := [\mathbf{1}~~ X]$\\
  $(F^1,W^1) := \texttt{\textbf{BuildBasis$\mathbf{^1}$}}(\tilde{F}^1)$\+\\
  // Columns of $F^1$ are linearly independent, and $F^1 = \tilde{F}^1 W^1$\-\\
  \textbf{Create first layer:} $\forall i\in \{1,\ldots,|F^1|\}$,~ $n^{1}_i(\bx) := \inner{W^1_i, [1~\bx]}$\\
  $F := F^1$\\
  \textbf{For} $t=2,3,\ldots$\+\+\\
  \textbf{Create candidate output layer:} $(n^{\text{output}}(\cdot),\texttt{error}):=\texttt{\textbf{OutputLayer}}(F)$\\
  \textbf{If} $\texttt{error}$ sufficiently small, \textbf{break}\\
  $\tilde{F}^t := \left[\left(F^{t-1}_1\circ F^1_1\right)~~~\left(F^{t-1}_1\circ F^1_2\right)~~~\ldots~~~\left(F^{t-1}_{|F^{t-1}|}\circ F^{1}_{|F^1|}\right)\right]$\\
  $(F^t,W^t) := \texttt{\textbf{BuildBasis$\mathbf{^t}$}}(F,\tilde{F}^t)$\+\\
  // Columns of $[F~F^t]$ are linearly independent \\
  // $W^t$ is such that $F^t_r = W^t_{i(r),j(r)}(F^{t-1}_{i(r)} \circ F^{1}_{j(r)})$\-\\
  \textbf{If} $|F^t|=0$, \textbf{break}\\
  \textbf{Create layer t:} For each non-zero element $W_{i(r),j(r)}$ in $W$, $r=1,..,|F^t|$,\+\\
  $n^t_r(\cdot) := W_{i(r),j(r)}n^{t-1}_{i(r)}(\cdot)n^1_{j(r)}(\cdot)$\-\\
  $F := [F ~~ F^t]$
\end{Balgorithm}
} \vskip 1cm

\texttt{\textbf{OutputLayer}}(F) \fbox { \begin{Balgorithm}
    $\bw := \arg\min_{\bw\in \reals^{|F|}}\ell(F\bw,\by)$\\
    $n^{\text{output}}(\cdot):=\inner{\bw,\bn(\cdot)}$,\+\\
    where $\bn(\cdot)=\left(n^{1}_1(\cdot),n^{1}_2(\cdot),\ldots,n^{t-1}_{|F^{t-1}|}(\cdot)\right)$ consists of the outputs of all nodes in the network\-\\
    Let $\texttt{error}$ be the error of $n^\text{output}(\cdot)$ on a validation data set\\
    \textbf{Return} $(n^\text{output}(\cdot),\texttt{error})$
\end{Balgorithm}
}

\end{center}
\caption{The Basis Learner algorithm. The top box is the main algorithm,
which constructs the network, and the bottom box is the output layer
construction procedure. At this stage, \texttt{BuildBasis$^1$} and
\texttt{BuildBasis$^t$} are not fixed, but we provide one possible
implementation in Figure \ref{fig:createbasisexample}. } \label{alg:main}
\end{figure}

\begin{figure}
\begin{center}
\includegraphics[scale=0.4]{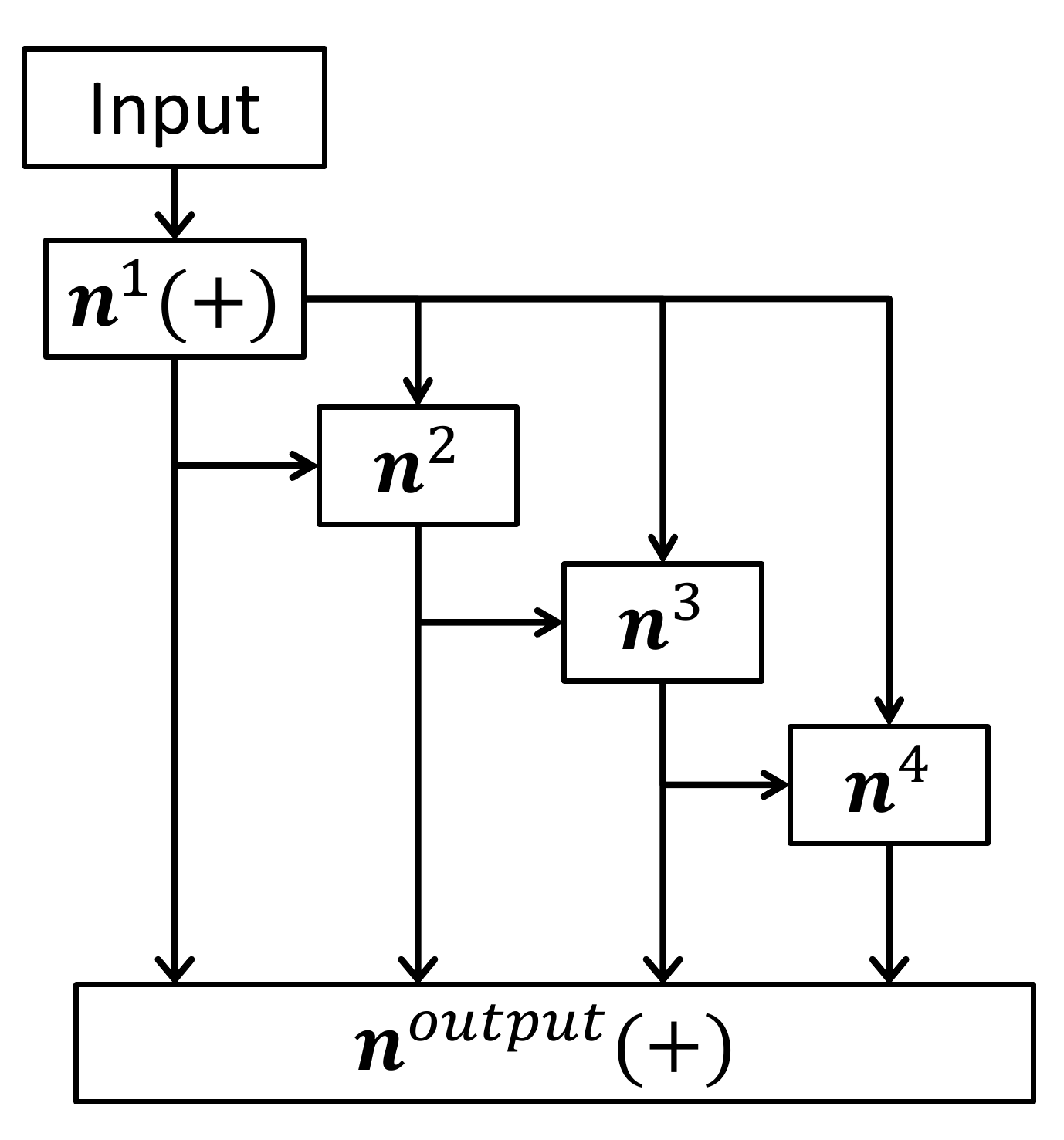}
\end{center}
\caption{Schematic diagram of the network's architecture, for polynomials of degree $4$. Each element represents a layer of nodes, as specified in Figure \ref{alg:main}. $(+)$ represent a layer of nodes which compute functions of the form $n(\bz) = \sum_i w_i z_i$, while other layers consist of nodes which compute functions of the form $n(\bz) = n((z_{i(1)},z_{i(2)})) = w z_{i(1)}z_{i(2)}$. In the diagram, computation moves top to bottom and left to right.}
\label{fig:architecture}
\end{figure}

\begin{figure}
\begin{center}
\textbf{\texttt{BuildBasis$\mathbf{^1}$}}($\tilde{F}^1$) - \textbf{example}\\
\fbox{
\begin{Balgorithm}
  Compute SVD: $\tilde{F}^1 = LDW^\top$\\
  Delete columns $W_i$ where $D_{i,i}=0$\\
  $B := \tilde{F}^1 W$\\
  \textbf{For} $i=1,\ldots,|W|$\+\+\\
  $b:= \sqrt{m}/\norm{B_i}$~;~
  $B_i := b B_i$~;~
  $W_i := b W_i$\-\-\\
  \textbf{Return} $(B,W)$ \end{Balgorithm} } \end{center}

\begin{center}
\textbf{\texttt{BuildBasis$\mathbf{^t}$}}($F,\tilde{F}^t$) - \textbf{example}\\
\fbox{
\begin{Balgorithm}
  Initialize $F^t:=[~]$, $W:=0$\\
  Compute orthonormal basis $O^F$ of $F$'s columns\+\\
  // Computed from previous call to \texttt{BuildBasis$^t$},\\
  // or directly via QR or SVD decomposition of $F$\-\\
  \textbf{For} $r=1,\ldots,|\tilde{F}^t|$\+\\
  $\bc:=\tilde{F}^t_r-O^F(O^F)^\top \tilde{F}^t_r$\\
  \textbf{If} $\norm{\bc} ~>~ tol$\+\\
  $F^t ~:=~ \left[F^t~~\frac{\sqrt{m}}{\norm{\tilde{F}^t_r}}\tilde{F}^t_r\right]$\\
  $W_{i(r),j(r)}= \sqrt{m}/\norm{\tilde{F}^t_r}$\+\\
  // $i(r),j(r)$ are those for which $\tilde{F}^t_r = F^{t-1}_{i(r)}\circ F^{1}_{j(r)}$\-\\
  $O^F:=\left[O^F~\frac{1}{\norm{\bc}}\bc\right]$\-\-\\
  \textbf{Return} $(F^t,W)$ \end{Balgorithm} } \end{center}
\caption{Example Implementations of the
  \texttt{BuildBasis$\mathbf{^1}$} and
  \texttt{BuildBasis$\mathbf{^t}$} procedures.
  \texttt{BuildBasis$\mathbf{^1}$} is implemented to return an
  orthogonal basis for $\tilde{F}^1$'s columns via SVD, while
  \texttt{BuildBasis$\mathbf{^t}$} uses a Gram-Schmidt procedure to
  find an appropriate columns subset of $\tilde{F}^t$, which together
  with $F$ forms a basis for $[F~\tilde{F}^t]$'s columns. In the pseudo-code,
  \emph{tol} is a tolerance parameter (e.g. machine precision).}
\label{fig:createbasisexample} \end{figure}

\paragraph{Constructing the Output Layer}
After $\Delta-1$ iterations (for some $\Delta$), we end up with a matrix $F$,
whose columns form a basis for all values attained by polynomials of degree
$\leq \Delta-1$ over the training data. Moreover, each column is exactly the
values attained by some node in our network over the training instances. On
top of this network, we can now train a simple linear predictor $\bw$,
miniming some convex loss function $\bw\mapsto \ell(F\bw,\by)$. This can be
done using any convex optimization procedure. We are assured that for any
polynomial of degree at most $\Delta-1$, there is some such linear predictor
$\bw$ which attains the same value as this polynomial over the data. This
linear predictor forms the output layer of our network.

As mentioned earlier, the inspiration to our approach is based on \cite{VCA},
which present an incremental method to efficiently build a basis for
polynomial functions. In particular, we use the same basic ideas in order to
ensure that after $t$ iterations, the resulting basis spans all polynomials
of degree at most $t$. While we owe a lot to their ideas, we should also
emphasize the differences: First, the emphasis there is to find a set of
generators for the ideal of polynomials vanishing on the training set.
Second, their goal there has nothing to do with deep learning, and the result
of the algorithm is a basis rather than a deep network. Third, they build the
basis in a different way than ours (forcing orthogonality of the basis
components), which does not seem as effective in our context (see end of
section \secref{sec:experiments}). Fourth, the practical variant of our
algorithm, which is described further on, is very different than the methods
used in \cite{VCA}.

Before continuing with the analysis, we make several important remarks:

\begin{remark}[Number of layers does not need to be fixed in advance]
Each iteration of the algorithm corresponds to another layer in the network.
However, note that we \emph{do not} need to specify the number of iterations.
Instead, we can simply create the layers one-by-one, each time attempting to
construct an output layer on top of the existing nodes. We then check the
performance of the resulting network on a validation set, and stop once we
reach satisfactory performance. See Figure \ref{fig:architecture} for
details.
\end{remark}

\begin{remark}[Flexibility of loss function]
Compared to our algorithm, many standard deep learning algorithms are more
constrained in terms of the loss function, especially those that directly
attempt to minimize training error. Since these algorithms solve hard,
non-convex problems, it is important that the loss will be as ``nice'' and
smooth as possible, and they often focus on the squared loss (for example,
the famous backpropagation algorithm \cite{rumelhart2002learning} is tailored
for this loss). In contrast, our algorithm can easily work with any convex
loss.
\end{remark}

\begin{remark}[Choice of Architecture]\label{remark:architecture}
In the intermediate layers we proposed constructing a basis for the columns
of $[F~ \tilde{F}^t]$ by using the columns of $F$ and a \emph{column subset}
of $\tilde{F}^t$. However, this is not the only way to construct a basis. For
example, one can try and find a full linear transformation $W^t$ so that the
columns of $[F\tilde{F}^t]W^t$ form an orthogonal basis to $[F~\tilde{F}^t]$.
However, our approach combines two important advantages. On one hand, it
creates a network with few connections where most nodes depend on the inputs
of only two other nodes. This makes the network very fast at test-time, as
well as better-generalizing in theory and in practice (see
\secref{sec:samplecomplexity} and \secref{sec:experiments} for more details).
On the other hand, it is still sufficiently expressive to \emph{compactly}
represent high-dimensional polynomials, in a product-of-sums form, whose
expansion as an explicit sum of monomials would be prohibitively large. In
particular, our network computes functions of the form $\bx\mapsto \sum_j
\alpha_j \prod_i(b^j_i+\inner{\bw^j_i,\bx})$, which involve exponentially
many monomials. The ability to compactly represent complex concepts is a
major principle in deep learning \cite{Beng09}. This is also why we chose to
use a linear transformation in the first layer - if all non-output layers
just compute the product of two outputs from the previous layers, then the
resulting predictor is limited to computing polynomials with a small number
of monomials.
\end{remark}

\begin{remark}[Connection to Algebraic Geometry]
  Our algorithm has some deep connections to algebraic geometry and
  interpolation theory. In particular, the problem of finding a basis
  for polynomial functions on a given set has been well studied in
  these areas for many years. However, most methods we are aware of -
  such as construction of Newton Basis polynomials or multivariate
  extensions of standard polynomial interpolation methods
  \cite{GasSa00} - are not computationally efficient, i.e. polynomial
  in the dimension $d$ and the polynomial degree $\Delta$. This is because
  they are based on explicit handling of monomials, of which there are
  $\binom{d+\Delta}{d}$. Efficient algorithms have been proposed for
  related problems, such as the Buchberger-M\"{o}ller algorithm for
  finding a set of generators for the ideal
  of polynomials vanishing on a given set (see
  \cite{MaMoMo93,AbBiKrRo00,VCA} and references therein). In a sense,
  our deep architecture is ``orthogonal" to this approach, since we focus
  on constructing a bsis for polynomials that \emph{do not} vanish on the set of
  points. This enables us to find an efficient, \emph{compact}
  representation, using a deep architecture, for getting arbitrary
  values over a training set.
\end{remark}

\subsection{Analysis}

After describing our generic algorithm and its derivation, we now turn to
prove its formal properties. In particular, we show that its runtime is
polynomial in the training set size $m$ and the dimension $d$, and that it
can drive the training error all the way to zero. In the next section, we
discuss how to make the algorithm more practical from a computational and
statistical point of view.

\begin{theorem}\label{thm:ideal}
Given a training set $(\bx_1,y_1),\ldots,(\bx_m,y_m)$, where
$\bx_1,\ldots,\bx_m$ are distinct points in $\reals^d$, suppose we run the
algorithm in Figure \ref{alg:main}, constructing a network of total depth
$\Delta$. Then:
\begin{enumerate}
\item\label{i:1} $|F|\leq m$, $|F^1|\leq d+1$, $\max_t |F^t|\leq m$,
    $\max_t |\tilde{F}^t|\leq m(d+1)$. \item\label{i:2} The algorithm
    terminates after at most $\min\{m-1,\Delta-2\}$ iterations of the For
    loop. \item\label{i:3} Assuming (for simplicity) $d\leq m$, the
    algorithm can be implemented using at most $\Ocal(m^2)$ memory and
    $\Ocal(d m^4)$ time, plus the polynomial time required to solve the
    convex optimization problem when computing the output layer.
\item\label{i:4} The network constructed by the algorithm has at most
    $\min\{m+1,\Delta\}$ layers, width at most $m$, and total number of
    nodes at most $m+1$. The total number of arithmetic operations (sums
    and products) performed to compute an output is $\Ocal(m+d^2)$.
\item\label{i:5} At the end of iteration $t$, $F$'s columns span all values
    attainable by polynomials of degree $\leq t$ on the training instances.
    \item\label{i:6} The training error of the network created by the
        algorithm is monotonically decreasing in $\Delta$. Moreover, if
        there exists some vector of prediction values $\bv$ such that
        $\ell(\bv,\by)=0$, then after at most $m$ iterations, the training
        error will be $0$.
\end{enumerate}
\end{theorem}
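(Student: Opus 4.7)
My plan is to prove the six parts in the order 1, 2, 5, 6, 3, 4, with part~5 (the spanning claim) doing the real work and the remaining parts essentially following from it or being routine bookkeeping.

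Parts~1 and 2 are bookkeeping. The bound $|F|\leq m$ is immediate because $F$'s columns are maintained linearly independent by construction; this propagates to $|F^1|\leq d+1$ from the dimension of $[\mathbf{1}~X]$, $|F^t|\leq m$ because $F^t$ is appended to $F$, and $|\tilde F^t|\leq |F^{t-1}|\cdot|F^1|\leq m(d+1)$. Each non-breaking iteration of the for loop appends at least one column to $F$, so the number of iterations is at most $m-|F^1|\leq m-1$; the $\Delta-2$ bound is just the definition of $\Delta$ (first layer + for-loop iterations + output layer).

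The core is part~5, which I would prove by induction on $t$. The base case $t=1$ follows from the construction of $F^1$ as a basis for the column span of $[\mathbf{1}~X]$, which is exactly the set of value-vectors of affine polynomials. For the inductive step, assume $F$'s columns span the value-vectors of all degree-$\leq t-1$ polynomials and fix a degree-$t$ polynomial $\bp$. By \lemref{lem:decompose}, write $\bp=\sum_i \bg_i \bh_i+\bk$ with $\bg_i$ of degree $1$, $\bh_i$ of degree $t-1$, and $\bk$ of degree $\leq t-1$. The inductive hypothesis puts the value-vectors of $\bk$ and each $\bh_i$ in the column span of $F$, and those of each $\bg_i$ in the column span of $F^1$. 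Bilinearly expanding the Hadamard products shows that each $\bg_i\bh_i$ has value-vector in the span of $\{F^s_a\circ F^1_b\}_{s=1,\ldots,t-1}$. The $s=t-1$ products are precisely the columns of $\tilde F^t$; for $s<t-1$, the product $F^s_a\circ F^1_b$ itself represents a polynomial of degree $\leq s+1\leq t-1$, which by the inductive hypothesis already sits in the span of $F$. Hence every degree-$t$ value-vector lies in the span of $[F~\tilde F^t]$, which by construction of $F^t$ in \texttt{BuildBasis}$^t$ equals the span of $[F~F^t]$. I expect this "older-layer products are already covered" step to be the main subtlety worth verifying carefully.

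With part~5 in hand, part~6 follows quickly: enlarging $F$ can only shrink $\min_\bw\ell(F\bw,\by)$, giving monotonicity in $\Delta$. For the zero-error statement, I will argue that within at most $m$ iterations one of two things happens: either $|F|$ reaches $m$ outright (since it grows by at least one per non-breaking iteration), or the algorithm breaks with $|F^t|=0$. In the first case the columns of $F$ form a basis of $\reals^m$. In the second, $|F^t|=0$ means $\tilde F^t$ already lies in the span of $F$; then the argument of part~5 extends, by an easy induction on degree, to show that $F$ spans value-vectors of polynomials of \emph{every} degree, which by the first lemma in \secref{sec:algorithmanalysis} is all of $\reals^m$. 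Either way, any target $\bv$ — in particular one with $\ell(\bv,\by)=0$ — can be written as $F\bw$, so the training error is $0$.

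Parts~3 and 4 are routine complexity accounting. Per iteration, forming $\tilde F^t$ of size $m\times\Ocal(md)$ costs $\Ocal(dm^2)$, and the basis-extension step (Gram-Schmidt of $\Ocal(md)$ length-$m$ vectors against a basis of size $\Ocal(m)$) costs $\Ocal(dm^3)$; multiplying by the $\Ocal(m)$ iterations and noting that $F$ and the maintained orthonormal basis $O^F$ each require $\Ocal(m^2)$ storage gives part~3. For part~4, the depth and width bounds come from parts~2 and~1, the total non-output node count equals $|F|\leq m$, and evaluating the network on a new $\bx$ costs $\Ocal(d^2)$ for the first layer (at most $d+1$ inner products in $\reals^{d+1}$), $\Ocal(1)$ for each of the at most $m$ subsequent product nodes, and $\Ocal(m)$ for the output inner product, totalling $\Ocal(d^2+m)$.
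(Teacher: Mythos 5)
Your proof is correct and takes essentially the same approach as the paper's; the only meaningful difference is that you spell out the induction behind item~5 (including the ``older-layer products are already covered'' step), which the paper dismisses with ``it follows immediately by the derivation presented earlier,'' and your argument there is sound. Everything else (the bookkeeping for items~1--4 and the two-case argument for item~6 via $|F|=m$ versus $|F^t|=0$) matches the paper's reasoning.
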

In item \ref{i:6}, we note that the assumption on $\ell$ is merely to
simplify the presentation. A more precise statement would be that we can get
the training error arbitrarily close to $\inf_{\bv}\ell(\bv,\bp)$ - see the
proof for details.

\begin{proof}
  The theorem is mostly an easy corollary of the derivation.

  As to item \ref{i:1}, since we maintain the $m$-dimensional columns
  of $F$ and each $F^t$ to be linearly independent, there cannot be
  more than $m$ of them. The bound on $|F^1|$ follows by construction
  (as we orthogonalize a matrix with $d+1$ columns), and the bound on
  $|\tilde{F}^t|$ now follows by definition of $\tilde{F}^t$.

  As to item \ref{i:2}, the algorithm always augments $F$ by $F^t$,
  and breaks whenever $|F^t|=0$. Since $F$ can have at most $m$
  columns, it follows the algorithm cannot run more than $m$
  iterations. The algorithm also terminates after at most $\Delta-2$
  iterations, by definition.

  As to item \ref{i:3}, the memory bound follows from the bounds on
  the sizes of $F,F^t$, and the associated sizes of the constructed
  network. Note that $\tilde{F}^t$ can require as much as
  $\Ocal(dm^2)$ memory, but we don't need to store it explicitly - any
  entry in $\tilde{F}^t$ is specified as a product of two entries in
  $F^1$ and $F^{t-1}$, which can be found and computed on-the-fly in
  $\Ocal(1)$ time. As to the time bound, each iteration of our
  algorithm involves computations polynomial in $m,d$, with the
  dominant factors being the \texttt{BuildBasis$^t$} and
  \texttt{BuildBasis$^1$}. The time bounds follow from the the
  implementations proposed in Figure \ref{fig:createbasisexample},
  using the upper bounds on the sizes of the relevant matrices, and
  the assumption that $d\leq m$.

  As to item \ref{i:4}, it follows from the fact that in each
  iteration, we create layer $t$ with at most $|F^t|$ new nodes, and
  there are at most $\min\{m-1,\Delta-2\}$ iterations/layers excluding
  the input and output layers. Moreover, each node in our network
  (except the output node) corresponds to a column in $|F|$, so there
  are at most $m$ nodes plus the output nodes. Finally, the network
  computes a linear transformation in $\reals^d$, then at most $m$
  nodes perform $2$ products each, and a final output node computes a
  weighted linear combination of the output of all other nodes (at
  most $m$) - so the number of operations is $\Ocal(m+d^2)$.

  As to item \ref{i:5}, it follows immediately by the derivation
  presented earlier.

  Finally, we need to show item \ref{i:6}. Recall from the derivation
  that in the output layer, we use the linear weights $\bw$ which
  minimize $\ell(F\bw,\by)$. If we increase the depth of our
  constructed network, what happens is that we augment $F$ by more and
  more linearly independent columns, the initial columns being exactly
  the same. Thus, the size of the set of prediction vectors
  $\{F\bw:\bw\in \reals^{|F|}\}$ only increases, and the training
  error can only go down.

  If we run the algorithm till $|F|=m$, then the columns of $F$ span
  $\reals^m$, since the columns of $F$ are linearly independent. Hence
  $\{F\bw:\bw\in \reals^{|F|}\}=\reals^m$. This implies that we can
  always find $\bw$ such that $F\bw=\bv$, where $\ell(\bv,\by)=0$, so
  the training error is zero. The only case left to treat is if the
  algorithm stops when $|F|<m$. However, we claim this can't happen.
  This can only happen if $|F^t|=0$ after the basis construction
  process, namely that $F$'s columns already span the columns of
  $\tilde{F}^t$. However, this would imply that we can span the values
  of all degree-$t$ polynomials on the training instances, using
  polynomials of degree $\leq t-1$. But using \lemref{lem:decompose},
  it would imply that we could write the values of every degree-$t+1$
  polynomial using a linear combination of polynomials of degree $\leq
  t-1$. Repeating this, we get that the values of polynomials of
  degree $t+2,t+3,\ldots$ are all spanned by polynomials of degree
  $t-1$. However, the values of all polynomials of any degree over $m$
  distinct points must span $\reals^m$, so we must have $|F|=m$.
\end{proof}

An immediate corollary of this result is the following:

\begin{remark}[The Basis Learner is a Universal Learner]
Our algorithm is a universal algorithm, in the sense that as we run it for
more and more iterations, the training error provably decreases, eventually
hitting zero. Thus, we can get a curve of solutions, trading-off between the
training error on one hand and the size of the resulting network (as well as
the potential of overfitting) on the other hand.
\end{remark}


\subsection{Making the Algorithm Practical}\label{subsec:practical}

\begin{figure}
\begin{center}
\textbf{\texttt{BuildBasis$\mathbf{^1}$}}($\tilde{F}^1$) - \textbf{Width-Limited Variant}\\
\fbox{
\begin{Balgorithm}
  \textbf{Parameter:} Layer width $\gamma\geq 0$\\
  Compute SVD: $\tilde{F}^1 = LDW^\top$\\
  $W := [W_1~W_2~\cdots~W_{\gamma}]$\+\\
  // Assumed to be the columns corresponding to $\gamma$ largest non-zero singular values\-\\
  $B := \tilde{F}^1 W$\\
  \textbf{For} $i=1,\ldots,|W|$\+\+\\
  $b:= \sqrt{m}/\norm{B_i}$~;~
  $B_i := b B_i$~;~
  $W_i := b W_i$\-\-\\
  \textbf{Return} $(B,W)$ \end{Balgorithm} } \end{center}

\begin{center}
\textbf{\texttt{BuildBasis$\mathbf{^t}$}}($F,\tilde{F}^t$) - \textbf{Width-Limited Variant}\\
\fbox{
\begin{Balgorithm}
  \textbf{Parameter:} Layer width $\gamma\geq 0$, batch size $b$\\
  Let $V$ denote target value vector/matrix (see caption)\\
  Initialize $F^t:=[~]$, $W:=0$\\
  Compute orthonormal basis $O^F$ of $F$'s columns\+\\
  // Computed in the previous call to \texttt{BuildBasis$^t$},\\
  // or directly via QR or SVD decomposition of $F$\-\\
  $V:=V-O^F(O^F)^\top V$\\
  \textbf{For} $r=1,2,\ldots,(\gamma/b) $\+\\
  $C:=\tilde{F}^t-O^F(O^F)^\top\tilde{F}^t$\\
  $C_i := \frac{1}{\norm{C_i}}C_i$ for all $i=1,\ldots,|C|$\\
  Compute orthonormal basis $O^V$ of $V$'s columns\\
  Let $i(1),\ldots,i(b)$ be indices of the $b$ linearly independent columns \+\+\\
  of $(O^V)^\top C$ with largest positive norm\-\-\\
  \textbf{For} $r=i(1),i(2),\ldots,i(b)$:\+\\
  $F^t ~:=~ \left[F^t~~\frac{\sqrt{m}}{\norm{\tilde{F}^t_{r}}}\tilde{F}^t_r\right]$\\
  $W_{i(r),j(r)}= \sqrt{m}/\norm{\tilde{F}^t_r}$\+\\
  // $i(r),j(r)$ are those for which $\tilde{F}^t_r = F^{t-1}_{i(r)}\circ F^{1}_{j(r)}$\-\-\\
  Compute orthonormal basis $O^C$ of columns of $[C_{i(1)}~C_{i(2)}\cdots C_{i(b)}]$\\
  $O^F:=\left[O^F ~ O^C\right]$\\
  $V:=V-O^C(O^C)^\top V$\-\-\\
  \textbf{Return} $(F^t,W)$ \end{Balgorithm} } \end{center} \caption{
Practical width-limited implementations of the
\texttt{BuildBasis$\mathbf{^1}$} and \texttt{BuildBasis$\mathbf{^t}$}
procedures. \texttt{BuildBasis$\mathbf{^1}$} is implemented to return an
orthogonal partial basis for $\tilde{F}^1$'s, which spans the largest
singular vectors of the data. \texttt{BuildBasis$\mathbf{^t}$} uses a
supervised OLS procedure in order to pick a partial basis for
$[F~\tilde{F}^t]$, which is most useful for prediction. In the code, $V$
represents the vector of training set labels $(y_1,\ldots,y_m)$ for binary
classification and regression, and the indicator matrix
$V_{i,j}=\text{Ind}(y_i=j)$ for multiclass prediction. For simplicity, we
assume the batch size $b$ divides the layer width $\gamma$.}
\label{fig:createbasispractical} \end{figure}

While the algorithm we presented runs in provable polynomial time, it has
some important limitations. In particular, while we can always control the
depth of the network by early stopping, we do not control its width (i.e. the
number of nodes created in each layer). In the worst case, it can be as large
as the number of training instances $m$. This has two drawbacks:
\begin{itemize} \item The algorithm can
  only be used for small datasets - when $m$ is
  large, we might get huge networks, and running the algorithm will be
  computationally prohibitive, involving manipulations of matrices of
  order $m\times md$. \item Even ignoring computational constraints,
  the huge network which
  might be created is likely to overfit. \end{itemize}

To tackle this, we propose a simple modification of our scheme, where the
network width is explicitly constrained at each iteration. Recall that the
width of a layer constructed at iteration $t$ is equal to the number of
columns in $F^t$. Till now, $F^t$ was such that the columns of $[F~F^t]$ span
the column space of $[F~\tilde{F}^t]$. So if $|\tilde{F}^t|$ is large,
$|F^t|$ might be large as well, resulting in a wide layer with many new
nodes. However, we can give up on exactly spanning $\tilde{F}^t$, and instead
seek to ``approximately span'' it, using a smaller partial basis of bounded
size $\gamma$, resulting in a layer of width $\gamma$.

The next natural question is how to choose this partial basis. There are
several possible criterions, both supervised and unsupervised. We will focus
on the following choice, which we found to be quite effective in practice:
\begin{itemize} \item The first layer computes
  a linear transformation
  which transforms the augmented data matrix $[\mathbf{1}~X]$ into its
  first $\gamma$ leading singular vectors (this is closely akin - although not
  identical - to Principal Component Analysis (PCA) - see Footnote
  \ref{footnote1}).
    \item The next layers use a standard Orthogonal Least Squares procedure \cite{chenBillLu89} to greedily pick the columns of $\tilde{F}^t$ which seem most relevant for prediction. The intuition is that we wish to quickly decrease the training error, using a small number of new nodes and in a computationally cheap way. Specifically, for binary classification and regression, we consider the vector $\by=(y_1,\ldots,y_m)$ of training labels/target values, and  iteratively pick the column of $\tilde{F}^t$ whose residual (after projecting on the existing basis $F$) is most correlated with the residual of $\by$ (again, after projecting on the existing basis $F$). The column is then added to the existing basis, and the process repeats itself. A simple extension of this idea can be applied to the multiclass case. Finally, to speed-up the computation, we can process the columns of $\tilde{F}^t$ in mini-batches, where each time we find and add the $b$ ($b>1$) most correlated vectors before iterating.
\end{itemize}
These procedures are implemented via the subroutines
\texttt{BuildBasis${^1}$} and \texttt{BuildBasis${^t}$}, whereas the main
algorithm (Figure \ref{alg:main}) remains unaffected. A precise pseudo-code
appears in Figure \ref{fig:createbasispractical}. We note that in a practical
implementation of the pseudo-code, we do not need to explicitly compute the
potentially large matrices $C,\tilde{F}_t$ - we can simply compute each
column and its associated correlation score one-by-one, and use the list of
scores to pick and re-generate the most correlated columns.

We now turn to discuss the theoretical properties of this width-constrained
variant of our algorithm. Recall that in its idealized version, the Basis
Learner is guaranteed to eventually decrease the training error to zero in
all cases. However, with a width constraint, there are adversarial cases
where the algorithm will get ``stuck'' and will terminate before the training
error gets to zero. This may happen when $|F|<m$, and all the columns of
$\tilde{F}^t$ are spanned by $F$, so no new linearly independent vectors can
be added to $F$, $|F^t|$ will be zero, and the algorithm will terminate (see
Figure \ref{alg:main}). However, we have never witnessed this happen in any
of our experiments, and we can prove that this is indeed the case as long as
the input instances are in ``general position'' (which we shortly formalize).
Thus, we get a completely analogous result to \thmref{thm:ideal}, for the
more practical variant of the Basis Learner.

Intuitively, the general position condition we require implies that if we
take any two columns $F_i,F_j$ in $F$, and $|F|<m$ then the product vector
$F_i \circ F_j$ is linearly independent from the columns of $F$. This is
intuitively plausible, since the entry-wise product $\circ$ is a highly
non-linear operation, so in general there is no reason that $F_i \circ F_j$
will happen to lie exactly at the subspace spanned by $F$'s columns. More
formally, we use the following:
\begin{definition}\label{def:genposition}
  Let $\bx_1,\dots,\bx_m$ be a set of distinct points in
  $\mathbb{R}^d$. We say that $\bx_1,\dots,\bx_m$ are in
  \emph{M-general position} if for every $m$ monomials, $g_1,\ldots,g_m$, the $m\times m$ matrix $M$ defined as $M_{i,j}=g_j(\bx_i)$ has rank $m$.
\end{definition}


The following theorem is analogous to \thmref{thm:ideal}. The only difference
is in item \ref{j:5}, in which we use the M-general position assumption.
\begin{theorem}\label{thm:real}
Given a training set $(\bx_1,y_1),\ldots,(\bx_m,y_m)$, where
$\bx_1,\ldots,\bx_m$ are distinct points in $\reals^d$, suppose we run the
algorithm in Figure \ref{alg:main}, with the subroutines implemented in
Figure \ref{fig:createbasispractical}, using a uniform value for the width
$\gamma$ and batch size $b$, constructing a network of depth $\Delta$. Then:
\begin{enumerate}
\item\label{j:1} $|F|\leq \gamma\Delta$, $\max_t |F^t|\leq \gamma$, $\max_t
    |\tilde{F}^t|\leq \gamma^2$. \item\label{j:2} Assume (for simplicity)
    that $d\leq m$, and the case of regression or classification with a
    constant number of classes. Then the algorithm can be implemented using
    at most $\Ocal(m(d+\gamma\Delta))$ memory and $\Ocal(\Delta m(\gamma b
    + \Delta\gamma^4/b))$ time, plus the polynomial time required to solve
    the convex optimization problem when computing the output layer, and
    the SVD in \texttt{CreateBasis$^1$} (see remark below).
\item\label{j:3} The network constructed by the algorithm has at most
    $\Delta$ layers, with at most $\gamma$ nodes in each layer. The total
    number of nodes is at most $\min\{m,(\Delta-1)\gamma\}+1$. The total
    number of arithmetic operations (sums and products) performed to
    compute an output is $\Ocal(\gamma(d+\Delta))$.
    \item\label{j:4} The training error of the network created by the
        algorithm is monotonically decreasing in $\Delta$.
\item \label{j:5} If the rows of the matrix $B$ returned by the
    width-limited variant of \texttt{BuildBasis${^1}$} are in M-general
    position, $\Delta$ is unconstrained, and there exists some vector of
    prediction values $\bv$ such that $\ell(\bv,\by)=0$, then after at most
    $m$ iterations, the training error will be $0$.
\end{enumerate}
\end{theorem}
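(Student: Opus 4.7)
The plan is to treat items \ref{j:1}--\ref{j:4} as routine adaptations of the proof of \thmref{thm:ideal}, with the width parameter $\gamma$ replacing the implicit $m$-bound on individual layers, and to focus effort on item \ref{j:5}, where M-general position is the essential new ingredient.

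For items \ref{j:1} and \ref{j:3}, I would read off the bounds directly from the width-limited pseudocode in Figure \ref{fig:createbasispractical}: the outer loop of \texttt{BuildBasis$^t$} runs $\gamma/b$ times and appends at most $b$ columns per pass, yielding $|F^t|\leq\gamma$; the definition of $\tilde{F}^t$ as products $F^{t-1}_i\circ F^1_j$ gives $|\tilde{F}^t|\leq\gamma^2$; and $|F|=\sum_s|F^s|\leq\gamma\Delta$. Network architecture and per-example operation counts then follow mechanically. For item \ref{j:2}, memory is dominated by storing $F$ and $X$, and we avoid materializing $\tilde{F}^t$ by generating its columns on demand. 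The dominant per-iteration time costs are the residualization $C\leftarrow\tilde{F}^t-O^F(O^F)^\top\tilde{F}^t$ and the OLS batch selection over $\gamma/b$ passes; summing over $\Delta$ iterations gives the stated bound. Item \ref{j:4} is the same monotonicity argument as in \thmref{thm:ideal} item \ref{i:6}: adding new linearly independent columns to $F$ enlarges the feasible set $\{F\bw:\bw\in\reals^{|F|}\}$, so the minimum of $\ell(F\bw,\by)$ cannot increase.

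The heart of the proof is item \ref{j:5}. The pivotal observation is that each column of $F^t$ for $t\geq 2$ equals, up to a nonzero scalar, the entrywise product of $t$ columns of $B:=F^1$; writing the rows of $B$ as $\bb_1,\dots,\bb_m\in\reals^\gamma$, each such column is the evaluation vector of a degree-$t$ monomial in the $\gamma$ coordinates, read at the points $\bb_1,\dots,\bb_m$. Similarly, columns of $F^1$ are degree-$1$ coordinate monomial evaluations, and columns of $\tilde{F}^t$ are degree-$t$ monomial evaluations. Since different layers supply monomials of different degrees and the algorithm keeps only linearly independent columns within each layer, the columns of $F$ at any stage correspond to $|F|$ \emph{distinct} monomials. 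Now suppose for contradiction that the algorithm terminates at iteration $t$ with $|F|<m$ and $|F^t|=0$; then some column of $\tilde{F}^t$, corresponding to a new (degree-$t$) monomial $g$, lies in $\mathrm{span}(F)$. Adjoining $g$ to the $|F|$ lower-degree monomials indexing $F$ produces $|F|+1\leq m$ distinct monomials whose evaluation vectors on $\bb_1,\dots,\bb_m$ are linearly dependent, directly contradicting M-general position (which via subsets guarantees linear independence of any $\leq m$ monomial evaluations on these points). Therefore $|F|$ strictly grows each iteration until $|F|=m$, at which stage $F$'s columns span $\reals^m$ and we can realize the assumed optimal $\bv$ as $F\bw$, giving $\ell(F\bw,\by)=0$; this occurs within $m$ iterations.

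The main technical obstacle is that \texttt{BuildBasis$^t$} in Figure \ref{fig:createbasispractical} picks columns greedily by correlation with the target residual and requires \emph{positive} projected norm, so in principle $|F^t|$ could be zero even when linearly independent columns of $\tilde{F}^t$ are available. I would handle this by arguing that as long as the training error is nonzero the target residual is nonzero, and M-general position combined with \lemref{lem:decompose} forces the accumulating spans of $\tilde{F}^t$'s residuals to eventually reach $\reals^m$; thus at least one linearly independent column must have strictly positive correlation with the target residual and be selected, ruling out premature termination. The remaining bookkeeping is direct.
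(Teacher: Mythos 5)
Your proof of items \ref{j:1}--\ref{j:4} and the core contradiction argument in item \ref{j:5} follow the paper's own proof essentially verbatim: the paper explicitly says the first four items are ``a simple adaptation'' of \thmref{thm:ideal}, and for item \ref{j:5} it argues exactly as you do, that if $\tilde{F}^t$'s columns are spanned by $F$ with $|F|<m$, then at most $m$ distinct monomials in the rows of $B$ have linearly dependent evaluation vectors, contradicting M-general position. Your spelled-out observation that each column of $F^t$ is (up to nonzero scaling) the evaluation on the rows of $B$ of a degree-$t$ monomial, and that distinct columns across and within layers correspond to distinct monomials, is a correct and useful elaboration of what the paper compresses into ``by construction.''

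Where you go beyond the paper is in flagging a potential premature-termination mode: the width-limited \texttt{BuildBasis$^t$} selects columns by positive correlation with the target residual, so in principle $|F^t|$ could be $0$ even when linearly independent residuals of $\tilde{F}^t$ exist but all happen to be orthogonal to the residualized target. This is a real subtlety; the paper's proof implicitly equates ``$|F^t|=0$'' with ``all columns of $\tilde{F}^t$ are spanned by $F$,'' which requires reading ``largest positive norm'' in the pseudocode as permitting zero-correlation columns whenever they are linearly independent, rather than a strict positivity filter. However, your proposed patch does not actually close this gap. The argument ``M-general position combined with \lemref{lem:decompose} forces the accumulating spans of $\tilde{F}^t$'s residuals to eventually reach $\reals^m$'' presupposes that $F$ continues to grow and, more importantly, tacitly uses the fact that $F$ spans all polynomials of degree $\le t-1$ at iteration $t$ --- a property of the idealized algorithm that the width-limited variant deliberately forfeits. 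In the width-limited setting, $\tilde{F}^t$ consists only of products of the (truncated) previous layer $F^{t-1}$ with $F^1$, so there is no guarantee its residuals span anything in particular, and no argument is given for why some residual must have nonzero inner product with the target residual. If you want a rigorous version of this item under a strict positivity reading of the pseudocode, you would either need to modify the selection rule (e.g., fall back to any linearly independent column when no positively correlated one exists), or replace the claim with the weaker statement the paper actually proves: under M-general position, $|F^t|=0$ can only occur when $F$'s columns already span $\tilde{F}^t$'s, which forces $|F|=m$.
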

\begin{proof}
The proof of the theorem, except part \ref{j:5}, is a simple adaptation of
the proof of \thmref{thm:ideal}, using our construction and the remarks we
made earlier. So, it is only left to prove part \ref{j:5}. The algorithm will
terminate before driving the error to zero if at some iteration we have that
the columns of $\tilde{F}^t$ are spanned by $F$ and $|F|<m$. But, by
construction, this implies that there are $|F|\le m$ monomials such that if
we apply them on the rows of $B$, we obtain linearly dependent vectors. This
contradicts the assumption that the rows of $B$ are in M-general position and
concludes our proof.
\end{proof}

We note that in item \ref{j:2}, the SVD mentioned is over an $m\times (d+1)$
matrix, which requires $\Ocal(md^2)$ time to perform exactly. However, one
can use randomized approximate SVD procedures (e.g. \cite{halko2011finding})
to perform the computation in $\Ocal(md\gamma)$ time. While not exact, these
approximate methods are known to perform very well in practice, and in our
experiments we observed no significant degradation by using them in lieu of
exact SVD. Overall, for fixed $\Delta,\gamma$, this allows our Basis Learner
algorithm to construct the network in time \emph{linear} in the data size.

\ignore{---
In the Appendix we argue that the general position assumption in part
\ref{j:5} is likely to hold in practice. To demonstrate this, the proposition
below shows that if we add a tiny amount of noise to each instance then the
condition in item \ref{j:5} will hold almost surely (i.e. with probability
$1$). This is similar to the smoothed analysis way of measuring the
complexity of algorithms (see \cite{spielman2009smoothed}). The proof is
given in the Appendix.
\begin{proposition} \label{lem:noiseGP} Let $\bx_1,\ldots,\bx_m\in\reals^d$, $d<m$ be a
  set of distinct input instances. If we modify each instance by
  adding random spherical Gaussian noise to it (with arbitrarily small positive variance),
  then the rows of the matrix $B$
  returned by the width-limited variant of \texttt{BuildBasis${^1}$}
  are in M-general position almost surely.
\end{proposition}}

Overall, compared to \thmref{thm:ideal}, we see that our more practical
variant significantly lowers the memory and time requirements (assuming
$\Delta,\gamma$ are small compared to $m$), and we still have the property
that the training error decreases monotonically with the network depth, and
reduces to zero under mild conditions that are likely to hold on natural
datasets.

Before continuing, we again emphasize that our approach is quite generic, and
that the criterions we presented in this section, to pick a partial basis at
each iteration, are by no means the only ones possible. For example, one can
use other greedy selection procedures to pick the best columns in
\texttt{BuildBasis}${^t}$, as well as unsupervised methods. Similarly, one
can use supervised methods to construct the first layer. Also, the width of
different layers may differ. However, our goal here is \emph{not} to propose
the most sophisticated and best-performing method, but rather demonstrate
that using our approach, even with very simple regularization and greedy
construction methods, can have good theoretical guarantees and work well
experimentally. Of course, much work remains in trying out other methods.

\section{Sample Complexity}\label{sec:samplecomplexity}

So far, we have focused on how the network we build reduces the training
error. However, in a learning context, what we are actually interested in is
getting good generalization error, namely good prediction in expectation over
the distribution from which our training data was sampled.

We can view our algorithm as a procedure which given training data, picks a
network of width $\gamma$ and depth $\Delta$. When we use this network for
binary classification (e.g. by taking the sign of the output to be the
predicted label), a relevant measure of generalization performance is the
VC-dimension of the class of such networks. Luckily, the VC-dimension of
neural networks is a well-studied topic. In particular, by Theorem 8.4 in
\cite{AnBa02}, we know that any binary function class in Euclidean space,
which is parameterized by at most $n$ parameters and each function can be
specified using at most $t$ addition, multiplication, and comparison
operations, has VC dimension at most $\Ocal(nt)$. Our network can be
specified in this manner, using at most $\Ocal(\gamma(d+\Delta))$ operations
and parameters (see \thmref{thm:real}). This immediately implies a VC
dimension bound, which ensures generalization if the training data size is
sufficiently large compared to the network size. We note that this bound is
very generic and rather coarse - we suspect that it can be substantially
improved in our case. However, qualitatively speaking, it tells us that
reducing the number of parameters in our network reduces overfitting. This
principle is used in our network architecture, where each node in the
intermediate layers is connected to just $2$ other nodes, rather than (say)
all nodes in the previous layer.

As an interesting comparison, note that our network essentially computes a
$\Delta$-degree polynomial, yet the VC dimension of all $\Delta$-degree
polynomial in $\reals^d$ is ${d+\Delta \choose
  \Delta}$, which grows very fast with $d$ and $\Delta$
\cite{BenDavidLin98}. This shows that our algorithm can indeed generalize
better than directly learning high-degree polynomials, which is essentially
intractable both statistically and computationally.

It is also possible to prove bounds on scale-sensitive measures of
generalization (which are relevant if we care about the prediction values
rather than just their sign, e.g. for regression). For example, it is
well-known that the expected squared loss can be related to the empirical
squared loss over the training data, given a bound on the fat-shattering
dimension of the class of functions we are learning \cite{AnBa02}. Combining
Theorems 11.13 and 14.1 from \cite{AnBa02}, it is known that for a class of
networks such as those we are learning, the fat-shattering dimension is
upper-bounded by the VC dimension of a slightly larger class of networks,
which have an additional real input and an additional output node computing a
linear threshold function in $\reals^2$. Such a class of networks has a
similar VC dimension to our original class, hence we can effectively bound
the fat-shattering dimension as well.

\section{Relation to Kernel Learning}\label{sec:kernels}

Kernel learning (see e.g. \cite{scholkopf2002learning}) has enjoyed immense
popularity over the past 15 years, as an efficient and principled way to
learn complex, non-linear predictors. A kernel predictor is of the form
$\sum_{i}\alpha_i k(\bx_i,\cdot)$, where $\bx_1,\ldots,\bx_m$ are the
training instances, and $k(\cdot,\cdot)$ is a kernel function, which
efficiently computes an inner product $\inner{\Psi(\cdot),\Psi(\cdot)}$ in a
high or infinite-dimensional Hilbert space, to which data is mapped
implicitly via the feature mapping $\Psi$. In this section, we discuss some
of the interesting relationships between our work and kernel learning.

In kernel learning, a common kernel choice is the polynomial kernel,
$k(\bx,\bx')=(1+\inner{\bx,\bx'})^\Delta$. It is easy to see that predictors
defined via the polynomial kernel correspond to polynomial functions of
degree $\Delta$. Moreover, if the Gram matrix (defined as
$G_{i,j}=k(\bx_i,\bx_j)$) is full-rank, any values on the training data can
be realized by a kernel predictor: For a desired vector of values $\by$,
simply find the coefficient vector $\balpha$ such that $G\balpha = \by$, and
note that this implies that for any $j$, $\sum_i \alpha_i k(\bx_i,\bx_j) =
y_j$. Thus, when our algorithm is ran to completion, our polynomial network
can represent the same predictor class as kernel predictors with a polynomial
kernel. However, there are some important differences, which can make our
system potentially better:

\begin{itemize}
\item With polynomial kernels, one always has to manipulate an $m\times m$
    matrix, which requires memory and runtime scaling at least
    quadratically in $m$. This can be very expensive if $m$ is large, and
    hinders the application of kernel learning to large-scale data. This
    quadratic dependence on $m$ is also true at test time, where we need to
    explicitly use our $m$ training examples for prediction. In contrast,
    the size of our network can be controlled, and the memory and runtime
    requirements of our algorithm is only linear in $m$ (see
    \thmref{thm:real}). If we get good results with a moderately-sized
    network, we can train and predict much faster than with kernels. In
    other words, we get the potential expressiveness of polynomial kernel
    predictors, but with the ability to \emph{control} the training and
    prediction complexity, potentially requiring much less time and memory.
    \item With kernels, one has to specify the degree $\Delta$ of the
    polynomial kernel in advance before training. In contrast, in our
    network, the degree of the resulting polynomial predictor does not have
    to be specified in advance - each iteration of our algorithm increases
    the effective degree, and we stop when satisfactory performance is
    obtained. \item Learning with polynomial kernels corresponds to
    learning a linear combination over the set of polynomials
    $\{(1+\inner{\bx_i,\cdot})^\Delta\}_{i=1}^{m}$. In contrast, our
    network learns (in the output layer) a linear combination of a
    different set of polynomials, which is constructed in a different,
    data-dependent way. Thus, our algorithm uses a different and
    incomparable hypothesis class compared to polynomial kernel learning.
\item Learning with polynomial kernels can be viewed as a network of a
    shallow architecture as follows: Each node in the first layer
    corresponds to one support vector and applies the function $\bx \mapsto
    (1+\inner{\bx_i,\bx})^\Delta$. Then, the second layer is a linear
    combination of the outputs of the first layer. In contrast, we learn a
    deeper architecture. Some empirical evidence shows that deeper
    architectures may express complicated functions more compactly than
    shallow architectures \cite{Beng09,DeBe11}.
\end{itemize}




\section{Experiments}\label{sec:experiments}

In this section, we present some preliminary experimental results to
demonstrate the feasibility of our approach. The focus here is not to show
superiority to existing learning approaches, but rather to illustrate how our
approach can match their performance on some benchmarks, using just a couple
of parameters and with no manual tuning.


To study our approach, we used the benchmarks and protocol described in
\cite{larochelle2007empirical} \footnote{These datasets and
  experimental details are publicly available at
  \url{http://www.iro.umontreal.ca/\~lisa/twiki/bin/view.cgi/Public/DeepVsShallowComparisonICML2007\#Downloadable_datasets}}
. These benchmark datasets were designed to test deep learning systems, and
require highly non-linear predictors. They consist of $8$ datasets, where
each instance is a $784$-dimensional vector, representing normalized
intensity values of a $28\times 28$ pixel image. These datasets are as
follows: \begin{enumerate}
\item\label{d1} \texttt{MNIST-basic}: The well-known MNIST digit
  recognition
  dataset\footnote{\url{http://yann.lecun.com/exdb/mnist}}, where the
  goal is to identify handwritten digits in the image. \item\label{d2}
  \texttt{MNIST-rotated}: Same as \texttt{MNIST-basic},
  but with the digits randomly rotated. \item\label{d3}
  \texttt{MNIST-back-image}: Same as
  \texttt{MNIST-basic}, but with patches taken from unrelated
  real-world images in the background. \item\label{d4}
  \texttt{MNIST-back-random}: Same as
  \texttt{MNIST-basic}, but with random pixel noise in the background.
\item\label{d5} \texttt{MNIST-rotated+back-image}: Same as
  \texttt{MNIST-back-image}, but with the digits randomly rotated.
\item\label{d6} \texttt{Rectangles}: Given an image of a rectangle,
  determine whether its height is larger than its width.
\item\label{d7} \texttt{Rectangles-images}: Same as
  \texttt{Rectangles}, but with patches taken from unrelated
  real-world images in the background.
    \item\label{d8} \texttt{Convex}: Given images of various shapes, determine whether they are convex or not.
\end{enumerate}
All datasets consist of 12,000 training instances and 50,000 test instances,
except for the \texttt{Rectangles} dataset (1200/50000 train/test instances)
and the \texttt{Convex} dataset (8000/50000 train/test instances). We refer
the reader to \cite{larochelle2007empirical} for more precise details on the
construction used.

In \cite{larochelle2007empirical}, for each dataset and algorithm, the last
2000 examples of the training set was split off and used as a validation sets
for parameter tuning (except \texttt{Rectangles}, where it was the last 200
examples). The algorithm was then trained on the entire training set using
those parameters, and classification error on the test set was reported.

The algorithms used in \cite{larochelle2007empirical} involved several deep
learning systems: Two deep belief net algorithms (DBN-1 and DBN-3), a stacked
autoencode algorithm (SAA-3), and a standard single-hidden-layer,
feed-forward neural network (NNet). Also, experiments were ran on Support
Vector Machines, using an RBF kernel (SVM-RBF) and a polynomial kernel
(SVM-Poly).

We experimented with the practical variant of our Basis Learner algorithm (as
described in \subsecref{subsec:practical}), using a simple ,
publicly-available implementation in
MATLAB\footnote{\url{http://www.wisdom.weizmann.ac.il/~shamiro/code/BasisLearner.zip}}.
As mentioned earlier in the text, we avoided storing $\tilde{F}^t$, instead
computing parts of it as the need arose. We followed the same experimental
protocol as above, using the same split of the training set and using the
validation set for parameter tuning. For the output layer, we used stochastic
gradient descent to train a linear classifier, using a standard
$L_2$-regularized hinge loss (or the multiclass hinge loss for multiclass
classification). In the intermediate layer construction procedure
(\texttt{BuildBasis$^t$}), we fixed the batch size to $50$. We tuned the
following $3$ parameters:
\begin{itemize}
\item Network width $\gamma\in \{50,100,150,200,250,300\}$ \item Network
    depth $\Delta\in \{2,3,4,5,6,7\}$
    \item Regularization parameter $\lambda\in
        \{10^{-7},10^{-6.5},\ldots,10^1\}$
\end{itemize}
Importantly, we did \emph{not} need to train a new network for every
combination of these values. Instead, for every value of $\gamma$, we simply
built the network one layer at a time, each time training an output layer
over the layers so far (using the different values of $\lambda$), and
checking the results on a validation set. We deviated from this protocol only
in the case of the \texttt{MNIST-basic} dataset, where we allowed ourselves
to check $4$ additional architectures: The width of the first layer
constrained to be $50$, and the other layers are of width $100$,$200$,$400$
or $600$. The reason for this is that MNIST is known to work well with a PCA
preprocessing (where the data is projected to a few dozen principal
components). Since our first layer also performs a similar type of
processing, it seems that a narrow first layer would work well for this
dataset, which is indeed what we've observed in practice. Without trying
these few additional architectures, the test classification error for
$\texttt{MNIST-basic}$ is $4.32\%$, which is about $0.8\%$ worse than what is
reported below.


We report the test error results (percentages of misclassified test examples)
in the table below. Each dataset number corresponds to the numbering of the
dataset descriptions above. For each dataset, we report the test error, and
in parenthesis indicate the depth/width of the network (where depth
corresponds to $\Delta$, so it includes the output layer). For comparison, we
also include the test error results reported in
\cite{larochelle2007empirical} for the other algorithms. Note that most of
the MNIST-related datasets correspond to multiclass classification with $10$
classes, so any result achieving less than 90\% error is non-trivial. \vskip
0.5cm
\begin{tabular}{c||c|c|c|c|c|c||c}
Dataset No.  & SVM-RBF & SVM-Poly & NNet & DBN-3 & SAA-3 & DBN-1 & \textbf{Basis Learner}\\\hline\hline
(\ref{d1})& 3.03 & 3.69 & 4.69 & 3.11 & 3.46 & 3.94 & 3.56 (5/600)\\\hline
(\ref{d2})& 11.11 & 15.42 & 18.11 & 10.30 & 10.30 & 14.69 & 10.30 (5/250)\\\hline
(\ref{d3})& 22.61 & 24.01 & 27.41 & 16.31 & 23.00 & 16.15& 22.43 (4/150)\\\hline
(\ref{d4})& 14.58 & 16.62 & 20.04 & 6.73 & 11.28 & 9.80& 9.17 (7/250) \\\hline
(\ref{d5})& 55.18 & 56.41 & 62.16 & 47.39 & 51.93 & 52.21& 50.47 (4/150)\\\hline
(\ref{d6})& 2.15 & 2.15 & 7.16 & 2.60 & 2.41 & 4.71 & 4.75 (4/50)\\\hline
(\ref{d7})&24.04 & 24.05 & 33.20 & 22.50 & 24.05 & 23.69 & 22.92 (3/100) \\\hline
(\ref{d8})& 19.13 & 19.82 & 32.25 & 18.63 & 18.41 & 19.92 & 15.45 (3/150)\\\hline
\end{tabular}
\vskip 0.5cm

From the results, we see that our algorithm performs quite well, building
deep networks of modest size which are competitive with (and for the
\texttt{Convex} dataset, even surpasses) the previous reported results. The
only exception is the \texttt{Rectangles} dataset (dataset no. \ref{d6}),
which is artificial and very small, and we found it hard to avoid overfitting
(the training error was zero, even after tuning $\lambda$). However, compared
to the other deep learning approaches, training our networks required minimal
human intervention and modest computational resources. The results are also
quite favorable compared to kernel predictors, but the predictors constructed
by our algorithm can be stored and evaluated much faster. Recall that a
kernel SVM generally requires time and memory proportional to the entire
training set in order to compute a single prediction at test time. In
contrast, the memory and time requirements of the predictors produced by our
algorithm are generally at least $1-2$ orders of magnitudes smaller.

\begin{figure}[t]
\begin{center}
\includegraphics[scale=0.5]{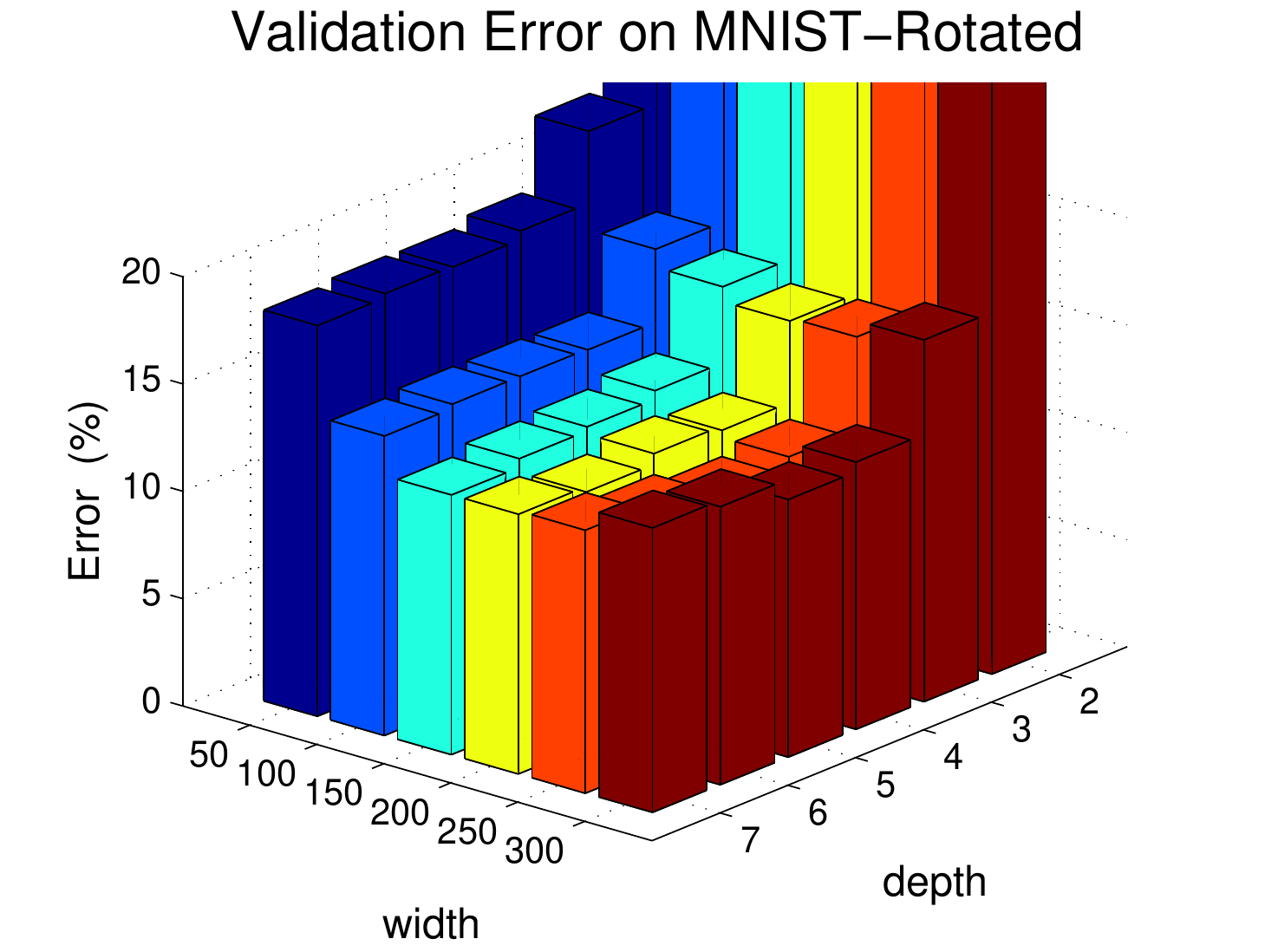}
\includegraphics[scale=0.5]{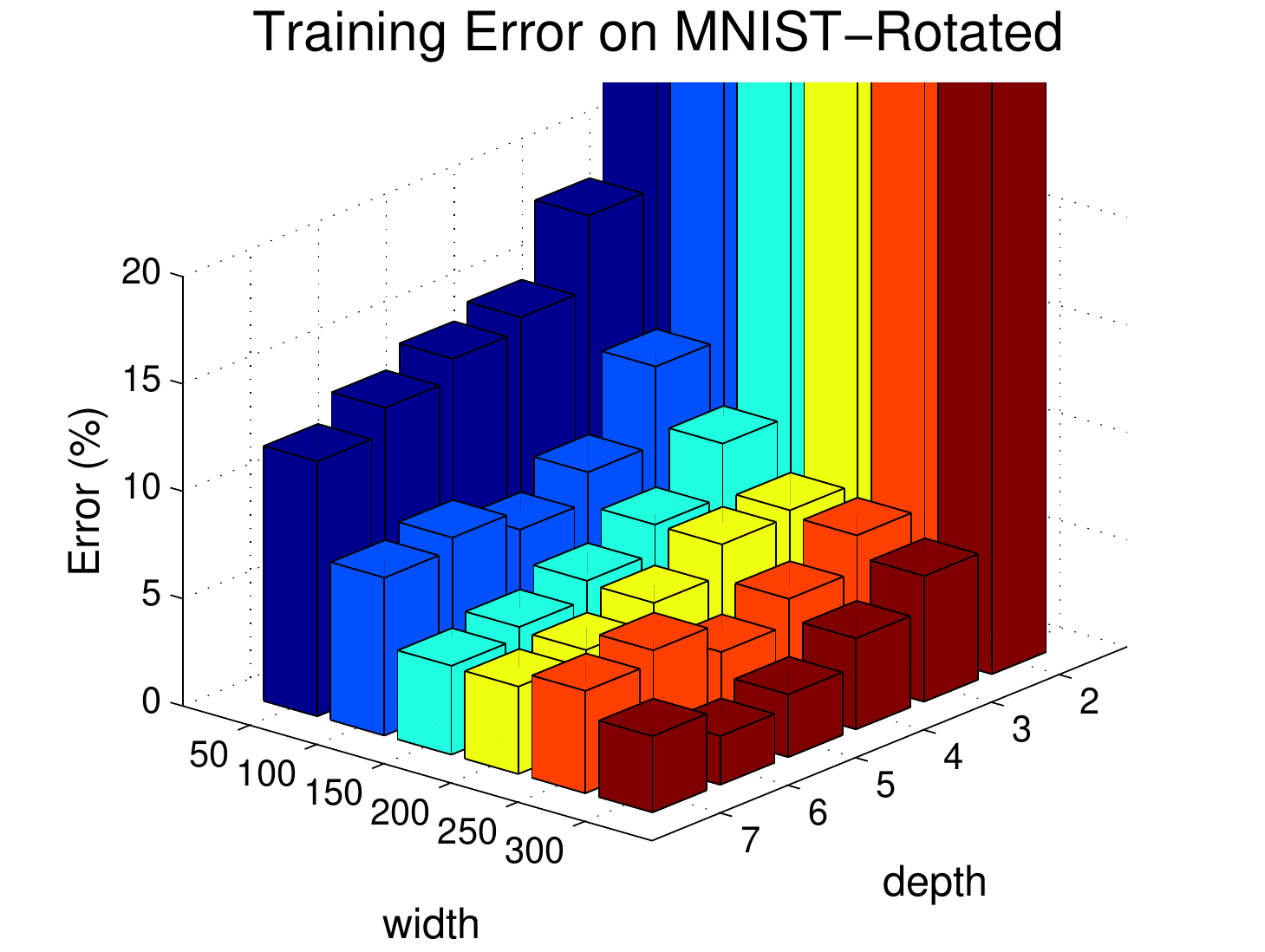}\\\vskip 0.5cm
\includegraphics[scale=0.5]{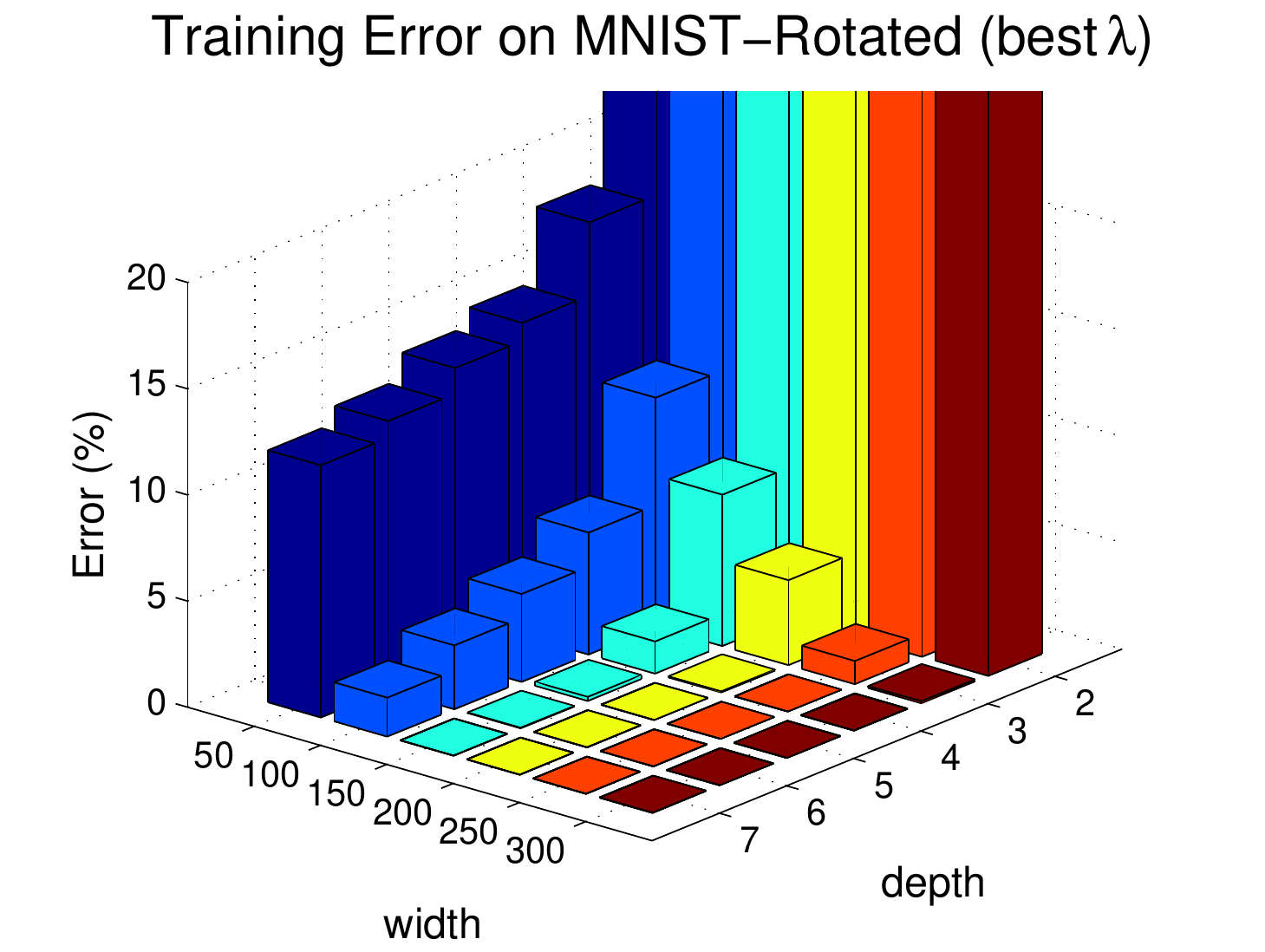}
\end{center}
\caption{Training and Validation Error Curves for the \texttt{MNIST-Rotated} dataset, as a function of trained network width and depth.}\label{fig:biasvariance}
\end{figure}

It is also illustrative to consider training/generalization error curves for
our algorithm, seeing how the bias/variance trade-off plays out for different
parameter choices. We present results for the \texttt{MNIST-rotated} dataset,
based on the data gathered in the parameter tuning stage (where the algorithm
was trained on the first 10,000 training examples, and tested on a validation
set of 2,000 examples). The results for the other datasets are qualitatively
similar. We investigate how 3 quantities behave as a function of the network
depth and width:
\begin{itemize}
\item The validation error (for the best choice of regularization parameter
    $\lambda$ in the output layer)
\item The corresponding training error (for the same choice of $\lambda$)
\item The lowest training error attained across all choices of $\lambda$
\end{itemize}
The first quantity shows how well we generalize as a function of the network
size, while the third quantity shows how expressive is our predictor class.
The second quantity is a hybrid, showing how expressive is our predictor
class when the output layer is regularized to avoid too much overfitting.

The behavior of these quantities is presented graphically in Figure
\ref{fig:biasvariance}. First of all, it's very clear that this dataset
requires a non-linear predictor: For a network depth of $2$, the resulting
predictor is just a linear classifier, whose train and test errors are around
50\% (off-the-charts). Dramatically better results are obtained with deeper
networks, which correspond to non-linear predictors. The lowest attainable
training error shrinks very quickly, attaining an error of virtually $0$ in
the larger depths/widths. This accords with our claim that the Basis Learner
algorithm is essentially a universal learning algorithm, able to
monotonically decrease the training error. A similar decreasing trend also
occurs in the training error once $\lambda$ is tuned based on the validation
set, but the effect of $\lambda$ is important here and the training errors
are not so small. In contrast, the validation error has a classical unimodal
behavior, where the error decreases initially, but as the network continues
to increase in size, overfitting starts to kick in.

Finally, we also performed some other experiments to test some of the
decisions we made in implementing the Basis Learner approach. In particular:
\begin{itemize} \item Choosing the intermediate layer's
  connections to be sparse (each node computes the product of only two
  other nodes) had a crucial effect. For example, we experimented with
  variants more similar in spirit to the VCA algorithm in \cite{VCA},
  where the columns of $F$ are forced to be orthogonal. This
  translates to adding a general linear transformation between each
  two layers. However, the variants we tried tended to perform worse,
  and suffer from overfitting. This may not be surprising, since these
  linear transformations add a large number of additional parameters,
  greatly increasing the complexity of the network and the risk of
  overfitting. \item Similarly, performing a linear transformation of
  the data in the first layer seems to be important. For example, we
  experimented with an alternative algorithm, which builds the first
  layer in the same way as the intermediate layers (using single
  products), and the results were quite inferior.
  While more experiments are required to explain this, we note that without this
  linear transformation in the first layer, the resulting predictor
  can only represent polynomials with a modest number of monomials
  (see Remark \ref{remark:architecture}). Moreover, the monomials tend to be
  very sparse on sparse data. \item As mentioned earlier, the
  algorithm still performed well when the exact SVD computation in the
  first layer construction was replaced by an approximate randomized
  SVD computation (as in \cite{halko2011finding}). This is useful in
  handling large datasets, where an exact SVD may be computationally expensive.
\end{itemize}

We end by emphasizing that these experimental results are preliminary, and
that much more work remains to further study the new learning approach that
we introduce here, both theoretically and experimentally.

\paragraph{Acknowledgements}
This research was funded in part by the Intel Collaborative Research
Institute for Computational Intelligence (ICRI-CI).

\bibliographystyle{plain}
\bibliography{DPN}

\begin{thebibliography}{10}

\bibitem{AbBiKrRo00}
J.~Abbott, A.~M. Bigatti, M.~Kreuzer, and L.~Robbiano.
\newblock Computing ideals of points.
\newblock {\em J. Symb. Comput.}, 30(4):341--356, 2000.

\bibitem{AnBa02}
M.~Anthony and P.~Bartlett.
\newblock {\em Neural Network Learning - Theoretical Foundations}.
\newblock Cambridge University Press, 2002.

\bibitem{BenDavidLin98}
S.~Ben-David and M.~Lindenbaum.
\newblock Localization vs. identification of semi-algebraic sets.
\newblock {\em Machine Learning}, 32(3):207--224, 1998.

\bibitem{Beng09}
Y.~Bengio.
\newblock Learning deep architectures for {AI}.
\newblock {\em Foundations and Trends in Machine Learning}, 2(1):1--127, 2009.

\bibitem{bengio2007scaling}
Y.~Bengio and Y.~LeCun.
\newblock Scaling learning algorithms towards ai.
\newblock {\em Large-Scale Kernel Machines}, 34, 2007.

\bibitem{chenBillLu89}
S.~Chen, S.A. Billings, and W.~Luo.
\newblock Orthogonal least squares methods and their application to non-linear
  system identification.
\newblock {\em International Journal of Control}, 50:1873--1896, 1989.

\bibitem{ColWes08}
R.~Collobert and J.~Weston.
\newblock A unified architecture for natural language processing: deep neural
  networks with multitask learning.
\newblock In {\em ICML}, 2008.

\bibitem{DeBe11}
O.~Delalleau and Y.~Bengio.
\newblock Shallow vs. deep sum-product networks.
\newblock In {\em NIPS}, 2011.

\bibitem{GasSa00}
M.~Gasca and T.~Sauer.
\newblock Polynomial interpolation in several variables.
\newblock {\em Adv. Comput. Math.}, 12(4):377--410, 2000.

\bibitem{MaMoMo93}
M.~G.Marinari, H.~M. M{\"o}ller, and T.~Mora.
\newblock Gr{\"o}bner bases of ideals defined by functionals with an
  application to ideals of projective points.
\newblock {\em Appl. Algebra Eng. Commun. Comput.}, 4:103--145, 1993.

\bibitem{halko2011finding}
N.~Halko, P.~Martinsson, and J.~Tropp.
\newblock Finding structure with randomness: Probabilistic algorithms for
  constructing approximate matrix decompositions.
\newblock {\em SIAM review}, 53(2):217--288, 2011.

\bibitem{HiOsTe06}
G.~E. Hinton, S.~Osindero, and Y.-W. Teh.
\newblock A fast learning algorithm for deep belief nets.
\newblock {\em Neural Computation}, 18(7):1527--1554, 2006.

\bibitem{larochelle2007empirical}
H.~Larochelle, D.~Erhan, A.~Courville, J.~Bergstra, and Y.~Bengio.
\newblock An empirical evaluation of deep architectures on problems with many
  factors of variation.
\newblock In {\em ICML}, 2007.
\newblock Available at
  \url{http://www.iro.umontreal.ca/\~lisa/twiki/bin/view.cgi/Public/DeepVsShallowComparisonICML2007\#Downloadable_datasets}.

\bibitem{LRMDCCDN12}
Q.~V. Le, M.-A. Ranzato, R.~Monga, M.~Devin, G.~Corrado, K.~Chen, J.~Dean, and
  A.~Y. Ng.
\newblock Building high-level features using large scale unsupervised learning.
\newblock In {\em ICML}, 2012.

\bibitem{lecun1995convolutional}
Y.~LeCun and Y.~Bengio.
\newblock Convolutional networks for images, speech, and time series.
\newblock {\em The handbook of brain theory and neural networks}, 3361, 1995.

\bibitem{lee2009convolutional}
H.~Lee, R.~Grosse, R.~Ranganath, and A.Y. Ng.
\newblock Convolutional deep belief networks for scalable unsupervised learning
  of hierarchical representations.
\newblock In {\em ICML}, 2009.

\bibitem{VCA}
R.~Livni, D.~Lehavi, S.~Schein, H.~Nachlieli, S.~Shalev-Shwartz, and
  A.~Globerson.
\newblock Vanishing component analysis.
\newblock In {\em ICML}, 2013.

\bibitem{PoDo11}
H.~Poon and P.~Domingos.
\newblock Sum-product networks: A new deep architecture.
\newblock In {\em UAI}, 2011.

\bibitem{ranzato2007unsupervised}
M.A. Ranzato, F.J. Huang, Y.L. Boureau, and Y.~Lecun.
\newblock Unsupervised learning of invariant feature hierarchies with
  applications to object recognition.
\newblock In {\em Computer Vision and Pattern Recognition, 2007. CVPR'07. IEEE
  Conference on}, pages 1--8. IEEE, 2007.

\bibitem{rumelhart2002learning}
D.E. Rumelhart, G.E. Hinton, and R.J. Williams.
\newblock Learning representations by back-propagating errors.
\newblock {\em Cognitive modeling}, 1:213, 2002.

\bibitem{scholkopf2002learning}
B.~Sch{\"o}lkopf and A.J. Smola.
\newblock {\em Learning with kernels: support vector machines, regularization,
  optimization and beyond}.
\newblock MIT Press, 2002.

\bibitem{spielman2009smoothed}
D.~A Spielman and S.-H. Teng.
\newblock Smoothed analysis: an attempt to explain the behavior of algorithms
  in practice.
\newblock {\em Communications of the ACM}, 52(10):76--84, 2009.

\end{thebibliography}
\ignore{
 \input{appendix}}
\end{document}